\documentclass[a4paper]{article}
\title{Tight Bounds for Machine Unlearning via Differential Privacy}

\usepackage[T1]{fontenc}
\usepackage[utf8]{inputenc}
\author{
    Yiyang Huang\\
    {University of New South Wales}\\
    \texttt{sophia.huang1@student.unsw.edu.au}
    \and
    Cl\'ement L. Canonne\\
    {University of Sydney}\\
    \texttt{clement.canonne@sydney.edu.au}
}

\usepackage{styles}
\newcommand{\cW}{\mathcal{W}}
\newcommand{\cX}{\mathcal{X}}
\newcommand{\cT}{\mathcal{T}}

\addbibresource{reference.bib}

\begin{document}

\maketitle

\begin{abstract}
  We consider the formulation of ``machine unlearning'' of Sekhari, Acharya, Kamath, and Suresh (NeurIPS 2021), which formalizes the so-called "right to be forgotten" by requiring that a trained model, upon request, should be able to 'unlearn' a number of points from the training data, as if they had never been included in the first place. Sekhari et al. established some positive and negative results about the number of data points that can be successfully unlearnt by a trained model without impacting the model's accuracy (the ``deletion capacity''), showing that machine unlearning could be achieved by using differentially private (DP) algorithms. However, their results left open a gap between upper and lower bounds on the deletion capacity of these algorithms: our work fully closes this gap, obtaining tight bounds on the deletion capacity achievable by DP-based machine unlearning algorithms.
\end{abstract}

\section{Introduction}

Machine learning models trained on user data are now routinely used virtually everywhere, from recommendation systems to predictive models. In many cases, this user data itself includes some sensitive information (e.g., healthcare or race) or private aspects (customer habits, geographic data), sometimes even protected by law. To address this issue~--~that the models trained on sensitive datasets must not leak personal or private information~--~in a principled fashion, one of the leading frameworks is that of \emph{differential privacy} (DP)~\cite{DworkMNS06}, which has \emph{de facto} become the standard for privacy-preserving machine learning over the past decade.

At its core, DP requires that the output of a randomized algorithm $M$ not change drastically if one to modify one of the datapoints: that is, if $X,X'$ are two datasets only differing in \emph{one} user's data, then for all possible outputs $S$ of the algorithm one should have roughly the same probability of observing $S$ under both inputs:
\[
	\probaOf{M(X) \in S} \leq e^\eps \probaOf{M(X') \in S} + \delta
\]
where $\eps > 0$ and $\delta \in(0,1]$ quantify the privacy guarantee (the smaller values, the better the privacy; see~\cref{sec:prelims} for formal definitions). Intuitively, an algorithm $M$ being $(\eps,\delta)$-DP means that its output does not reveal much about any particular user's data, since the output would be nearly identical had this user's data been completely different. 

While the use of differential privacy can mitigate many privacy concerns, it does come with some limitations. The first is the overhead in brings: that is, ensuring differential privacy for a learning task typically incurs an overhead in the number of data points needed to achieve the same accuracy guarantee. Perhaps more importantly, DP does not solve all possible privacy concerns: even if a ML model is trained on a sensitive dataset in a differentially private way, the dataset may still be subject to some attacks~--~e.g., if the server where the training data is stored is itself compromised. Somewhat tautologically: DP  is not a silver bullet, and only provides meaningful guarantees against the threat models it was meant to address.

Another type of concerns focuses on the individual \emph{right to maintain control on one's own data}: broadly speaking, this is asking that each user can (under some reasonable circumstances) require that their personal data and information be removed from a company's collected data and trained models. This so-called ``right to be forgotten,'' which allow people to request that their data be deleted entirely from an ML system, has been passed into legislation or is considered in some form or another by various countries or entities, prominently the European Union's General Data Protection Regulation (GDPR), the California Privacy Rights Act (CCRA), Canada's proposed Consumer Privacy Protection Act (CPPA), and most recently in Australia~\cite{australiagdprlike}. 

However, translating this ``right to be forgotten'' into practice comes with a host of challenges, starting with ~\cite{GargGV2020} that provided a formal definitional framework using cryptographic concepts~--~which led to a new area of research in ML and computer science, that of \emph{machine unlearning}. A naive technical solution would be for a given company to keep the original training set at all times, and, upon a deletion request by a user, remove this user's data from the set before retraining the whole model on the result. This, of course, comes up with two major drawbacks: first, the cost to the company, in terms of time and computational resources, of retraining a large model on a regular basis. Second, the \emph{privacy cost}, as keeping the training set for an indefinite time in order to be able to handle the deletion requests leaves the door open to potential attacks and data breaches. Fortunately, there have been, over the past few years, a flurry of better (and more involved) approaches to machine unlearning, to handle deletion requests much more efficiently, and requiring to maintain much less of the training set (see, e.g.,~\cite{BourtouleCCJTZL21,NguyenHNLYN_survey_unlearning_22}, and related work below).

The above discussion, still, brings to light an important question: \emph{is machine unlearning, paradoxically, at odds with (differential) privacy? What is the connection between the two notions: are they complementary, or is there a trade-off between them?}

This is the main question this work sets out to address. Our starting point is the probabilistic definition of machine unlearning set forth by Sekhari, Acharya, Kamath, and Suresh~\cite{acharya_remember_unlearning_2021}, itself reminiscent of the definition of Differential Privacy (see Definition~\ref{def:unlearning} for the formal statement): a pair of algorithms $(A,\bar{A})$ is an \emph{$(\eps,\delta)$-unlearning algorithm} if (1) $A\colon \cX^\ast\to\cW$ is a (randomized) learning algorithm which, given a dataset $X\subseteq \cX^\ast$, outputs model parameters $A(X) \in \cW$; and (2)~$\bar{A}\colon \cX^\ast\times \cW\times \cT \to \cW$ which, on input a set of \emph{deletion requests} $U\subseteq X$, previous model parameters $w$, and some succinct additional ``side information'' $T(X)\in \cT$ about the original dataset, output updated model parameters $w' \in \cW$ from which the data from $U$ has been unlearned, that is, such that
\[
    \probaOf{\bar{A}(U, A(X), T(X)) \in W} \leq e^{\eps} \probaOf{\bar{A}(\emptyset, A(X \setminus U), T(X \setminus U)) \in W} + \delta
\]
and
\[
    \probaOf{\bar{A}(\emptyset, A(X \setminus U), T(X \setminus U)) \in W} \leq e^{\eps} \probaOf{\bar{A}(U, A(X), T(X)) \in W} + \delta
\]
for every possible set $W\subseteq \cW$ of model parameters. Loosely speaking, this requires that the outcomes of (a) training a model $M$ via $A$ on the dataset $X$ then unlearning some of the original training data $U\subseteq X$ from $M$ using $\bar{A}$, and (b) training a model $M'$ via $A$ directly on the dataset $X\setminus U$ then unlearning nothing via $\bar{A}$, be nearly indistinguishable. 

In their paper, Sekhari et al.~\cite{acharya_remember_unlearning_2021} focus on generalization guarantees of unlearning algorithm, i.e.,  what can be achieved by unlearning algorithms when focusing on population loss, namely, when aiming to minimize 
\[
    F(w) \coloneqq \mathbb{E}_{x\sim \mathcal{D}}[  f(w, x) ]
\]
given a prespecified loss function $f\colon \cW\times \cX \to \R$, where the expectation is over the draw of a new datapoint from the underlying distribution $p$ on the sample space. The quality of a learning algorithm $A$ is then measured by the expected excess risk 
\[
    R(f, A) \coloneqq \expect{ F(A(X)) - \inf_{w \in \cW} F(w) }
\]
where the expectation is taking over the random choice of a dataset $X\sim \mathcal{D}^n$ of size $n$, and the randomness of $A$ itself. The focus of~\cite{acharya_remember_unlearning_2021}, as is ours, is then to quantify the \emph{deletion capacity} achievable for $(\eps,\delta)$-unlearning given a prespecified loss function, that is, the maximum number of data points one can ask to be forgotten (maximum size of the subset $U$) before the excess risk increases by more than some threshold (see Definition~\ref{eq:del:capacity}).

In their paper,~\cite{acharya_remember_unlearning_2021} draw a connection between DP learning algorithms and unlearning ones, showing that DP learning algorithms do provide \emph{some} unlearning guarantees out-of-the-box, and that one can achieve non-trivial unlearning guarantees for convex loss functions by leveraging the literature on differentially private optimization and learning. One of their main results is showing that these DP-based unlearning algorithms, which crucially \emph{do not rely on any side information} (the additional input $T(X)\in \cT$ provided to the unlearning algorithm $\bar{A}$) can handle strictly fewer deletion requests than general unlearning algorithms which \emph{do} rely on such side information.

Their results, however, do not fully characterize the deletion capacity of these ``DP-based'' machine unlearning algorithms, leaving a significant gap between their upper and lower bounds. We argue that fully understanding this quantity is crucial, as DP-based unlearning algorithms are \emph{exactly} those for which there is no conflict between the two notions of DP and unlearning~--~\emph{instead, this class of algorithms is the one for which they work hand in hand.} This is in contrast to the more general unlearning algorithms relying on maintaining and storing side information about the training set, as this side information can make their deployment susceptible to privacy breaches.

\subsection{Our contributions}

The main contribution of our paper is a tight bound on the ``amount of unlearning'' achievable by \emph{any} machine unlearning algorithm which does not rely on side information. For the sake of exposition, we state in this section informal versions of our results.

\begin{theorem}[Unlearning For Convex Loss Functions (Informal; see Theorems~\ref{theo:unlearning:convex:ub} and~\ref{theo:unlearning:convex:lb})]
    \label{theo:main:informal}
    Let $f\colon\cW\times \cX\to\R$ be a $1$-Lipschitz convex loss function, where $\cW\subseteq\R^d$ is the parameter space. There exists an $(\eps,\delta)$-machine unlearning algorithm which, trained on a dataset $S\subseteq \cX^n$, does not store \emph{any} side information about the training set besides the learned model, and can unlearn up to
    \[
    	m = O\!\mleft( \frac{n\eps\alpha}{\sqrt{d\log(1/\delta)}} \mright)
    \]
    datapoints without incurring excess population risk greater than $\alpha$. Moreover, this is tight: there exists a $1$-Lipschitz \emph{linear} loss function such that no machine unlearning algorithm can unlearn $\Omega( \frac{n\eps\alpha}{\sqrt{d\log(1/\delta)}} )$ data points without excess population risk $\alpha$, unless it stores side information.
\end{theorem}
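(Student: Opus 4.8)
The plan is to prove the two halves of the statement separately --- an explicit side-information-free $(\eps,\delta)$-unlearning algorithm meeting the claimed deletion capacity, and a linear-loss instance on which no such algorithm can do better --- and both halves run through differentially private stochastic convex optimization (DP-SCO). The guiding principle is that when the unlearning map is handed nothing but the learned model, ``unlearning a block of $m$ data points'' behaves exactly like ``group privacy with group size $m$'', so that the deletion capacity is governed by the accuracy one must sacrifice in order to be roughly $(\eps/m)$-differentially private.

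For the upper bound I would take $A$ to be a DP-SCO algorithm achieving the optimal $(\eps_0,\delta_0)$-DP population-risk rate for $1$-Lipschitz convex losses on $\cW\subseteq\R^d$, and define the unlearning algorithm $\bar A$ to discard its deletion request and simply return the model it is given; in particular it stores no side information ($T\equiv\bot$). Then $\bar A(U,A(X),\bot)=A(X)$ and $\bar A(\emptyset,A(X\setminus U),\bot)=A(X\setminus U)$, so the unlearning condition amounts to requiring that $A(X)$ and $A(X\setminus U)$ be $(\eps,\delta)$-indistinguishable for every $U$ with $|U|=m$ --- that is, $m$-group privacy of $A$, which a standard composition argument derives from $(\eps_0,\delta_0)$-DP with $\eps=m\eps_0$ and $\delta=m e^{m\eps_0}\delta_0$. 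Choosing $\eps_0=\Theta(\eps/m)$ and $\delta_0=\Theta(\delta/(m e^{\eps}))$ makes $(A,\bar A)$ an $(\eps,\delta)$-unlearning algorithm, while the optimal DP-SCO bound gives the unlearned model $A(X)$ excess population risk $O(\frac{1}{\sqrt n}+\frac{\sqrt{d\log(1/\delta_0)}}{n\eps_0})=O(\frac{1}{\sqrt n}+\frac{m\sqrt{d\log(1/\delta)}}{n\eps})$; requiring this to be at most $\alpha$ (in the regime $\alpha=\Omega(1/\sqrt n)$ where learning to accuracy $\alpha$ is possible at all) yields $m=O(n\eps\alpha/\sqrt{d\log(1/\delta)})$.

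For the lower bound I would use the canonical hard instance for DP-SCO: the linear loss $f(w,x)=-\langle w,x\rangle$ on $\cW=\{\,w\in\R^d:\|w\|_2\le 1\,\}$ with samples drawn from $\{\pm 1/\sqrt d\}^d$ (so $f$ is $1$-Lipschitz), under the hard family of product distributions whose coordinate biases are set by a uniformly random sign pattern $s\in\{\pm 1\}^d$. Suppose $(A,\bar A)$ is an $(\eps,\delta)$-unlearning algorithm that stores no side information and unlearns $m$ points with excess population risk at most $\alpha$. The structural point is that, precisely because $\bar A$ sees only the learned model, for a dataset $X\sim p^n$ and the deletion request $U$ consisting of, say, the first $m$ samples, the unlearning guarantee forces the unlearned model $\bar A(U,A(X),\bot)$ --- which by assumption is $\alpha$-accurate --- to be $(\eps,\delta)$-indistinguishable from $\bar A(\emptyset,A(X\setminus U),\bot)$, a random variable that is a function of $X\setminus U$ alone and is therefore (given $s$) statistically independent of the $m$ deleted samples. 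Thus the learn-then-unlearn pipeline produces an accurate estimator whose output distribution is, in the $(\eps,\delta)$ sense, oblivious to a block of $m$ of its $n$ samples --- which is exactly the situation that fingerprinting / score-attack lower bounds for DP-SCO exploit. Instantiating that argument with the $m$ deleted samples in the role of the privacy-protected data, and accounting for the weakening of the effective privacy level by the group size $m$ (as in group privacy), should give excess risk $\Omega(\min\{1,\ m\sqrt{d\log(1/\delta)}/(n\eps)\})$, i.e.\ no side-information-free unlearning algorithm can unlearn $\Omega(n\eps\alpha/\sqrt{d\log(1/\delta)})$ points at risk $\alpha$.

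The hard part will be this last step: extracting from ``unlearning without side information'' a differential-privacy statement sharp enough to feed the DP-SCO lower bound at the \emph{tight} parameters --- recovering both the linear-in-$m$ factor and the $\sqrt{\log(1/\delta)}$, rather than only $\sqrt d/(n\eps)$ or a crude $\mathrm{poly}(1/\delta)$ or additive-$\delta$ loss. The obstruction is that $\bar A$ may respond to different deletion requests in entirely different ways, so one cannot simply claim that $A$, or the map $X\mapsto\bar A(\emptyset,A(X),\bot)$, is differentially private; the argument has to be run directly on the indistinguishability $\bar A(U,A(X),\bot)\approx_{(\eps,\delta)}\bar A(\emptyset,A(X\setminus U),\bot)$ and its interaction with the fingerprinting correlations, most plausibly together with an averaging over which $m$-subset is deleted. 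Obtaining the sharp $\sqrt{\log(1/\delta)}$ will additionally require the fingerprinting-code-based DP-SCO lower bound rather than the elementary mean-estimation version.
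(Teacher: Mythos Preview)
Your overall architecture --- take a DP-SCO learner for $A$, let $\bar A$ return the model unchanged, and prove the impossibility half via the hard linear instance for private mean estimation --- is exactly the paper's. The differences are in how each half is executed.

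\textbf{Upper bound.} Running group privacy at the $(\eps,\delta)$-DP level, as you do, is precisely what Sekhari et al.\ did and what this paper sharpens. With $\eps_0=\eps/m$ and $\delta_0=\Theta(\delta/(m e^{\eps}))$ one has $\log(1/\delta_0)=\log(1/\delta)+\log m+\eps$, so the risk you quote is really
\[
O\!\Bigl(\tfrac{1}{\sqrt n}+\tfrac{m\sqrt{d(\log(1/\delta)+\log m+\eps)}}{n\eps}\Bigr),
\]
off by polylog factors in the usual regime and by $\sqrt\eps$ when $\eps\gg1$ --- the very slack the paper advertises removing. The paper gets the clean bound by working in zCDP: take a $(\rho^2/2)$-zCDP SCO algorithm (Feldman--Koren--Talwar), apply the \emph{lossless} zCDP group-privacy rule $\rho^2/2\mapsto m^2\rho^2/2$, and only then convert to $(\eps,\delta)$. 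Setting $\rho=\Theta\!\bigl(\eps/(m\sqrt{\log(1/\delta)})\bigr)$ gives excess risk $O\bigl(1/\sqrt n + m\sqrt{d\log(1/\delta)}/(n\eps)\bigr)$ with no overhead.

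\textbf{Lower bound.} You are right that the crux is getting the tight $m\sqrt{d\log(1/\delta)}/(n\eps)$ rate out of the unlearning guarantee, but you do not need to reopen the fingerprinting proof or average over deletion subsets. The paper's device is a one-line ``converse of group privacy'' that exploits the linearity of the hard loss: given any algorithm $\mathcal M$ on size-$n$ datasets that is $(\eps,\delta)$-DP for edit distance $m$, define on size-$N$ datasets (with $N=n/m$)
\[
\mathcal M'(x_1,\dots,x_N)\;=\;\mathcal M(\underbrace{x_1,\dots,x_1}_{m},\,\underbrace{x_2,\dots,x_2}_{m},\,\dots,\,\underbrace{x_N,\dots,x_N}_{m}).
\]
Changing one $x_j$ changes exactly $m$ entries of the inflated dataset, so $\mathcal M'$ is standard $(\eps,\delta)$-DP; and because the target $q(S)=\frac{1}{|S|}\sum_i x_i$ is an average, $q$ is invariant under this duplication, so $\mathcal M'$ inherits $\mathcal M$'s accuracy. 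Now invoke Steinke--Ullman's one-way-marginals lower bound \emph{black box} on $\mathcal M'$ to get error $\Omega\!\bigl(\sqrt{d\log(1/\delta)}/(N\eps)\bigr)=\Omega\!\bigl(m\sqrt{d\log(1/\delta)}/(n\eps)\bigr)$, recovering both the linear-in-$m$ factor and the $\sqrt{\log(1/\delta)}$ in one stroke. This replaces the ``group-fingerprinting'' step you flagged as the hard part with a reduction to a known bound; what makes it work is precisely that the hard loss is linear, so the duplicated dataset has the same optimum as the original.
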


Our techniques also allow us to easily establish the analogue for \emph{strongly} convex optimization:
\begin{theorem}[Unlearning For Strongly Convex Loss Functions (Informal)]
    Let $f\colon\cW\times \cX\to\R$ be a $1$-Lipschitz \emph{strongly} convex loss function. There exists an $(\eps,\delta)$-machine unlearning algorithm which, trained on a dataset $S\subseteq \cX^n$, does not store \emph{any} side information about the training set besides the learned model, and can unlearn up to
    \[
    	m = O\!\mleft( \frac{n\eps\sqrt{\alpha}}{\sqrt{d\log(1/\delta)}} \mright)
    \]
    datapoints without incurring excess population risk greater than $\alpha$. Moreover, this is tight.
    \label{theo:main:informal:sc}
\end{theorem}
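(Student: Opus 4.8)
The plan is to obtain both directions from the machinery already developed for the convex case (\cref{theo:unlearning:convex:ub} and \cref{theo:unlearning:convex:lb}), using the single additional structural fact that strong convexity converts an iterate‑distance bound into a quadratically smaller excess‑risk bound, and vice versa.

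\textbf{Upper bound.} The approach is to run the same differential‑privacy‑based unlearning scheme as for \cref{theo:unlearning:convex:ub} (following the template of \cite{acharya_remember_unlearning_2021})~--~essentially, train an $(\eps_0,\delta_0)$‑DP learner and have the unlearning procedure post‑process it without recomputation, so that by group privacy an $(\eps,\delta)$‑unlearning guarantee for deletions of size $m$ follows once the learner is $(\Theta(\eps/m),\Theta(\delta\eps/m))$‑DP~--~but now instantiated with the known \emph{optimal DP algorithm for strongly convex stochastic convex optimization} in place of the generic convex one. For a $1$‑Lipschitz, $\Theta(1)$‑strongly convex loss that algorithm attains expected excess population risk $O\!\mleft(\frac1n + \frac{d\log(1/\delta_0)}{n^2\eps_0^2}\mright)$; plugging in $\eps_0=\Theta(\eps/m)$, $\delta_0=\Theta(\delta\eps/m)$ (so $\log(1/\delta_0)=\Theta(\log(1/\delta))$ in the relevant regime) and requiring this to be at most $\alpha$~--~the $1/n$ term being lower order~--~gives $m^2 = O\!\mleft(\frac{n^2\eps^2\alpha}{d\log(1/\delta)}\mright)$, hence $m = O\!\mleft(\frac{n\eps\sqrt\alpha}{\sqrt{d\log(1/\delta)}}\mright)$. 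I would stress that applying \cref{theo:unlearning:convex:ub} verbatim, treating $f$ as merely convex, only yields $m = O(n\eps\alpha/\sqrt{d\log(1/\delta)})$, which is strictly worse once $\alpha<1$: exploiting strong convexity is genuinely needed.

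\textbf{Lower bound.} The plan is to re‑run the construction behind \cref{theo:unlearning:convex:lb} with the linear loss replaced by the quadratic loss $f(w,x)\coloneqq\tfrac12\|w-x\|_2^2$ on $\cW=\cX=\{u\in\R^d : \|u\|_2\le\tfrac12\}$, which is $1$‑strongly convex and, since $\nabla_w f(w,x)=w-x$ has norm at most $1$ on this domain, also $1$‑Lipschitz. Here the population risk equals $\tfrac12\|w-\mu\|_2^2$ up to an additive constant, where $\mu\coloneqq\expect[x]$; its minimizer $w^\star=\mu$ lies in $\cW$, and the excess risk of any output $\hat w$ is exactly $\tfrac12\|\hat w-\mu\|_2^2$~--~so achieving excess population risk at most $\alpha$ is \emph{equivalent} to estimating $\mu$ in $\ell_2$ to error $\sqrt{2\alpha}$. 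One then invokes, exactly as for \cref{theo:unlearning:convex:lb}, the reduction from side‑information‑free unlearning of $m$ points to $(\eps,\delta)$‑indistinguishability on datasets differing in $m$ points, followed by the fingerprinting lower bound for mean estimation, which rules out such an estimate unless $\sqrt{2\alpha}=\Omega\!\mleft(\frac{m\sqrt{d\log(1/\delta)}}{n\eps}\mright)$, i.e.\ $m=\Omega\!\mleft(\frac{n\eps\sqrt\alpha}{\sqrt{d\log(1/\delta)}}\mright)$. (Alternatively, one can reduce to \cref{theo:unlearning:convex:lb} itself: a $\sqrt{2\alpha}$‑accurate estimate of $\mu$ normalizes to an $O(\alpha/\|\mu\|_2)$‑accurate minimizer of the linear loss $\langle x,\cdot\rangle$ on the unit ball, and $\|\mu\|_2=\Theta(m\sqrt{d\log(1/\delta)}/(n\eps))$ in that hard instance.)

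The only genuinely new bookkeeping is the reparametrization ``excess risk $\alpha\leftrightarrow$ mean‑estimation error $\sqrt{2\alpha}$'' and the check that confining the domain to a ball still containing the minimizer keeps the quadratic loss $1$‑Lipschitz; the group‑privacy degradation of $(\eps_0,\delta_0)$, the no‑side‑information argument, and the fingerprinting attack all transport unchanged from the convex case. Accordingly, the only place I expect to have to be careful is in that transport~--~in particular, checking that the minimizer $\mu$ stays inside the restricted domain for \emph{every} distribution in the fingerprinting family, and that the group‑size‑$m$ tracing attack loses exactly the anticipated factor of $m$~--~rather than in any substantially new idea.
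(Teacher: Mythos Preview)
Your proposal matches the paper's: redo the convex proofs with the strongly-convex-specific ingredient swapped in so that the rate picks up the $\alpha\to\sqrt{\alpha}$ improvement. One correction on the upper bound: the scheme of \cref{theo:unlearning:convex:ub} that you invoke does \emph{not} go through $(\eps_0,\delta_0)$-DP group privacy as you describe~--~that is the \cite{acharya_remember_unlearning_2021} route, and it is precisely what this paper sets out to tighten. The paper instead takes a $(\rho^2/2)$-zCDP learner (for strongly convex, \cite[Theorem~5.1]{vitaly_private_sco_linear_2020} in place of \cref{lemma:zcdp-sco-bound}), applies zCDP group privacy (\cref{proposition:zcdp-group-privacy}) to obtain $(m^2\rho^2/2)$-zCDP for $m$-edit neighbors, and only then converts to $(\eps,\delta)$, giving $\rho=\Theta(\eps/(m\sqrt{\log(1/\delta)}))$; plugging this into the strongly convex zCDP rate $O(1/n + d/(\rho^2 n^2))$ yields the bound with a clean $\sqrt{\log(1/\delta)}$ in the denominator. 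Your $(\eps_0,\delta_0)$ parameterization instead produces $\sqrt{\log(1/\delta_0)}=\Theta(\sqrt{\log(e^\eps/\delta)})$, matching the paper's bound only when $\eps = O(\log(1/\delta))$ and losing the $\sqrt{\eps}$ factor flagged in the introduction otherwise. For the lower bound, your explicit quadratic loss $f(w,x)=\tfrac12\|w-x\|_2^2$ and the reparametrization ``excess risk $\alpha \leftrightarrow$ mean-estimation error $\sqrt{2\alpha}$'' is exactly what the paper does by citing \cite[Theorem~5.2]{steinke_between_2016} in place of \cref{theorem:main_theorem_bst14}; the replication reduction (\cref{lemma:reduction:lb:medit}) and the empirical-to-population step transport unchanged, as you anticipate.
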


We note that, prior to our work, only bounds for the convex loss function case were known, with an upper bound of $m = \tilde{O}( n\eps\alpha/\sqrt{d\log(e^\eps/\delta)})$ (loose by polylogarithmic factors for $\eps = O(1)$, as well as an $1/\sqrt{\eps}$ factor for $\eps \gg 1$) and a limited lower bound stating that $m \geq 1$ is only possible if $n\eps/\sqrt{d} = \Omega(1)$. We point out that while the high-privacy ($\eps \ll 1$) is sought as desirable, it is common in real-life, deployed systems to use much larger values of $\eps$, typically $\eps \gg 1$. We also provide analogous bounds for the case of \emph{pure} unlearning, i.e., $\delta=0$ (\cref{theo:pure:unlearning:ub,theo:pure:unlearning:lb}).

Our next contribution, motivated by the similarity of the formalisation of machine unlearning (without side information) and that of differential privacy, is to establish the analogue of key properties of DP for machine unlearning, namely, \emph{post-processing} and \emph{composition} of machine unlearning algorithms. To do so, we first identify a natural property of machine unlearning algorithms, which, when satisfied, will allow for the composition properties:

\begin{assumption}[Unlearning Laziness]
    \label{assumption:unlearning-chaining}
    An unlearning algorithm $(\bar{A}, A)$ is said to be \emph{lazy} if, when provided with an \emph{empty} set of deletion requests, the unlearning algorithm $\bar{A}$ does not update the model. That is, $\bar{A}(\emptyset, A(X), T(X)) = A(X)$ for all datasets $X$.
\end{assumption}

We again emphasize that this laziness property is not only intuitive, it is also satisfied by several existing unlearning algorithms, and in particular those proposed in~\cite{acharya_remember_unlearning_2021}.

\begin{theorem}[Post-processing of  unlearning]
    Let $(\bar{A}, A)$ be an $(\eps,\delta)$-unlearning algorithm taking no side information. Let $f\colon \cW \to \cW $ be an arbitrary (possibly randomized) function. Then $(f \circ \bar{A}, A)$ is also an $(\eps,\delta)$-unlearning algorithm.
\end{theorem}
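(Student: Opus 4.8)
The plan is to adapt, essentially verbatim, the classical proof that differential privacy is closed under post-processing. Since $(\bar{A},A)$ uses no side information, the hypothesis states that for every dataset $X$, every $U\subseteq X$, and every measurable $W\subseteq\cW$,
\[
\probaOf{\bar{A}(U, A(X)) \in W} \leq e^{\eps}\probaOf{\bar{A}(\emptyset, A(X\setminus U)) \in W} + \delta,
\]
together with the same inequality with the two sides exchanged. The goal is to derive the identical pair of inequalities with $f\circ\bar{A}$ in place of $\bar{A}$, which is exactly the $(\eps,\delta)$-unlearning property of $(f\circ\bar{A},A)$; note also that $f\circ\bar{A}$ still takes no side information, as it receives exactly the same inputs as $\bar{A}$ and merely post-processes its output.

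First I would handle the case where $f$ is deterministic. Fixing $X$, $U\subseteq X$, and a measurable target set $W\subseteq\cW$, I would set $W' \coloneqq f^{-1}(W)$ and observe that the event $\{f(\bar{A}(\cdot))\in W\}$ is literally the event $\{\bar{A}(\cdot)\in W'\}$. Applying the unlearning guarantee of $(\bar{A},A)$ to the set $W'$ then immediately yields both inequalities for $(f\circ\bar{A},A)$ with the same constants $e^{\eps}$ and $\delta$.

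Next I would reduce the general randomized case to the deterministic one. Write $f$ as $f_R$, where $R$ is the internal random seed of $f$, drawn from some distribution independently of all the randomness used by $A$ and $\bar{A}$, so that $f_r$ is a deterministic map for each fixed value $r$. Conditioning on $R=r$ and invoking the deterministic case pointwise gives, for each $r$,
\[
\probaOf{f_r(\bar{A}(U, A(X))) \in W} \leq e^{\eps}\probaOf{f_r(\bar{A}(\emptyset, A(X\setminus U))) \in W} + \delta .
\]
Taking the expectation over $R$ and using linearity of expectation — together with independence, so that $\probaOf{f(\bar{A}(\cdot))\in W}$ equals $\mathbb{E}_R[\probaOf{f_R(\bar{A}(\cdot))\in W \mid R}]$ — recovers the desired inequality for the randomized post-processor, and the symmetric direction is identical.

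I do not expect a genuine obstacle: the statement is the unlearning analogue of a textbook DP fact, and the argument is short. The only points that call for (mild) care are the measurability of $f^{-1}(W)$, which is why one tacitly restricts to measurable post-processors as is standard, and making explicit that the randomness of $f$ is independent of that of $A$ and $\bar{A}$, which is precisely what licenses the ``mixture of deterministic maps'' decomposition used in the last step.
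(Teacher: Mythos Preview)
Your proposal is correct and follows essentially the same approach as the paper: both adapt the standard DP post-processing argument by pulling back the target set through $f$ and invoking the unlearning guarantee on the preimage. If anything, your write-up is slightly more thorough, since you spell out the reduction from randomized to deterministic $f$ via averaging over the independent seed, whereas the paper simply asserts ``without loss of generality'' that $f$ may be taken deterministic.
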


Under our laziness assumption, we also establish the following:
\begin{theorem}[Chaining of unlearning]
    Let $(\bar{A}, {A})$ be a \emph{lazy} $(\eps, \delta)$-unlearning algorithm taking no side information, and able to handle up to $m$ deletion requests. Then, the algorithm which uses $(\bar{A}, {A})$ to sequentially unlearn $k$ disjoint deletion requests $U_1, \ldots, U_k \subseteq X$ such that $|\cup_{i} U_i| \leq m$, outputting
        \[
            \bar{A}(U_k, \ldots, \bar{A}(U_1, A(X))\ldots)
        \]
    is an $(\eps', \delta')$-unlearning algorithm, with $\eps' = k \eps$ and $\delta' = \delta\cdot \frac{e^{k\eps}-1}{e^\eps-1} = O(k\delta)$ (for $k = O(1/\eps)$).
\end{theorem}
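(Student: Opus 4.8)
The plan is to run a hybrid (telescoping) argument over the $k$ sequential unlearning steps, reducing each step to a single invocation of the base $(\eps,\delta)$-unlearning guarantee together with the fact that $(\eps,\delta)$-indistinguishability is preserved under randomized post-processing. Fix a dataset $X$ and pairwise disjoint deletion requests $U_1,\dots,U_k\subseteq X$ with $|\bigcup_i U_i|\le m$; write $U=\bigcup_{i=1}^{k}U_i$ and, for $0\le j\le k$, set $X_j=X\setminus(U_1\cup\cdots\cup U_j)$, so that $X_0=X$ and $X_k=X\setminus U$. Because the $U_i$ are disjoint and each $U_i\subseteq X$, we have $U_{j+1}\subseteq X_j$; and $|U_{j+1}|\le|U|\le m$, so the base unlearning guarantee may legitimately be applied to the instance $(X_j,U_{j+1})$.

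For $0\le j\le k$ define the random variable
\[
	G_j \;=\; \bar{A}\bigl(U_k,\,\bar{A}(U_{k-1},\,\ldots,\,\bar{A}(U_{j+1},\,A(X_j))\ldots)\bigr),
\]
i.e.\ imagine the model had been trained directly on $X_j$ and then the remaining requests $U_{j+1},\dots,U_k$ are unlearned in order (with $G_k=A(X_k)$, no unlearning step applied). Then $G_0$ is exactly the output of the composed algorithm run on $X$ with the requests $U_1,\dots,U_k$, while, by applying the laziness assumption $k$ times, $G_k=A(X\setminus U)=\bar{A}(\emptyset,\ldots,\bar{A}(\emptyset,A(X\setminus U))\ldots)$ is exactly the output of the composed algorithm run on $X\setminus U$ with $k$ empty requests — i.e.\ the two distributions that the unlearning definition (Definition~\ref{def:unlearning}) asks us to compare, noting that there is no side information.

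The core step is to show that $G_j$ and $G_{j+1}$ are $(\eps,\delta)$-indistinguishable. Since $U_{j+1}\subseteq X_j$ and $|U_{j+1}|\le m$, the $(\eps,\delta)$-unlearning guarantee of $(\bar{A},A)$ — with no side information, and using laziness to rewrite $\bar{A}(\emptyset,A(X_j\setminus U_{j+1}))=A(X_{j+1})$ — gives $\probaOf{\bar{A}(U_{j+1},A(X_j))\in W'}\le e^{\eps}\probaOf{A(X_{j+1})\in W'}+\delta$ for all $W'\subseteq\cW$, and the symmetric inequality. Now write $G_j=\Phi_j\bigl(\bar{A}(U_{j+1},A(X_j))\bigr)$ and $G_{j+1}=\Phi_j\bigl(A(X_{j+1})\bigr)$, where $\Phi_j=\bar{A}(U_k,\cdot)\circ\cdots\circ\bar{A}(U_{j+2},\cdot)$ is a single fixed randomized map $\cW\to\cW$; crucially this is a valid post-processing precisely because $\bar{A}$ takes no side information, so each partial unlearning step is a randomized function of the current model alone. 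As $(\eps,\delta)$-indistinguishability is closed under randomized post-processing, we conclude $\probaOf{G_j\in W}\le e^{\eps}\probaOf{G_{j+1}\in W}+\delta$, and symmetrically, for every $W\subseteq\cW$. Chaining these $k$ inequalities along $G_0,G_1,\dots,G_k$ then yields
\[
	\probaOf{G_0\in W}\le e^{k\eps}\probaOf{G_k\in W}+\delta\sum_{i=0}^{k-1}e^{i\eps}=e^{k\eps}\probaOf{G_k\in W}+\delta\cdot\frac{e^{k\eps}-1}{e^{\eps}-1},
\]
and symmetrically, which is precisely the claimed $(\eps',\delta')$-guarantee with $\eps'=k\eps$ and $\delta'=\delta(e^{k\eps}-1)/(e^{\eps}-1)$; bounding $e^{k\eps}-1\le k\eps e^{k\eps}$ and $e^{\eps}-1\ge\eps$ gives $\delta'\le ke^{k\eps}\delta=O(k\delta)$ whenever $k\eps=O(1)$.

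I do not expect a genuine obstacle here: the argument is a hybrid/telescoping calculation. The two points that need care are (i) verifying that each intermediate step is a legitimate use of the base unlearning guarantee — this is where disjointness of the $U_i$ (to guarantee $U_{j+1}\subseteq X_j$) and the deletion-capacity bound $|\bigcup_i U_i|\le m$ both enter — and (ii) justifying that collapsing a suffix of the unlearning chain is post-processing of the current model, which is exactly where the no-side-information hypothesis is essential, since side information would depend on the original dataset and could not be reconstructed from the model alone. The laziness assumption is used only to identify the $k$-fold empty-request run on $X\setminus U$ with $A(X\setminus U)$, so that the endpoints $G_0,G_k$ of the hybrid chain coincide with the two distributions appearing in the definition of unlearning.
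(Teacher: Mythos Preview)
Your proposal is correct and takes essentially the same approach as the paper: both arguments telescope across the $k$ steps, using the base $(\eps,\delta)$-unlearning guarantee at each step, laziness to collapse $\bar{A}(\emptyset,\cdot)$ to the identity, and post-processing (via the no-side-information assumption) to push the remaining unlearning calls through. The paper phrases this as an induction on $k$ rather than an explicit hybrid chain $G_0,\dots,G_k$, but the content is identical; your writeup is in fact more explicit about why each ingredient (disjointness, the capacity bound, no side information) is needed. One small quibble: your closing sentence says laziness is used ``only'' to identify $G_k$ with the empty-request run, but you also (correctly) invoked it inside the core step to rewrite $\bar{A}(\emptyset,A(X_{j+1}))=A(X_{j+1})$; you may want to adjust that summary.
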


and, finally,
\begin{theorem}[Advanced grouposition of unlearning]
        Let $(\bar{A}_1, A), \ldots, (\bar{A}_k, A)$ be \emph{lazy} $(\eps, \delta)$-unlearning (with common learning algorithm $A$) taking no side information, and define the grouposition of those unlearning algorithms, $\tilde{A}$ as
        \[
            \tilde{A}(U, A(X)) = f\mleft( \bar{A}_1(U, A(X)), \ldots, \bar{A}_k(U, A(X)) \mright)\,.
        \]
        where $f\colon \cW^k\to\cW$ is any (possibly randomized) function. 
        Then, for every $\delta'>0$, $( \tilde{A},A)$ is an $(\eps', \delta')$-unlearning taking no side information, where $\eps' = \frac{k}{2}\eps^2 + \eps\sqrt{2k\ln{(1/\delta')}}$.
        \label{thm:unlearning-grouposition}
    \end{theorem}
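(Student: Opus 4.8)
The statement is the \emph{advanced composition theorem} of differential privacy transported to the unlearning setting, so the plan is to reprove it by importing the classical argument (I will use the concentrated-DP route, which reproduces the stated constant exactly), relying on two facts already at our disposal: the post-processing theorem for unlearning --- whose proof only manipulates the output space, so it extends verbatim to post-processing maps $f\colon\cW^k\to\cW$ --- and \cref{assumption:unlearning-chaining}, which is what lets us identify the reference distribution cleanly.

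First I would unfold the definitions. By laziness, $\bar{A}_i(\emptyset, A(X\setminus U)) = A(X\setminus U)$ for every $i$, so the ``deleted-from-the-start'' run of the grouposition is $f(A(X\setminus U),\dots,A(X\setminus U))$, whereas the ``unlearned'' run is $f(\bar{A}_1(U,A(X)),\dots,\bar{A}_k(U,A(X)))$. The $(\eps,\delta)$-unlearning guarantee of the $i$-th pair is then exactly the statement that the $i$-th coordinates of the two pre-$f$ tuples --- namely $\bar{A}_i(U,A(X))$ and $A(X\setminus U)$ --- are $(\eps,\delta)$-indistinguishable in both directions. Setting $B(\,\cdot\,, A(X)) \coloneqq \bigl(\bar{A}_1(\,\cdot\,,A(X)),\dots,\bar{A}_k(\,\cdot\,,A(X))\bigr)$ we have $\tilde{A} = f\circ B$, so by the $\cW^k\to\cW$ version of post-processing it suffices to show that $(B,A)$ is itself an $(\eps',\delta')$-unlearning algorithm, i.e.\ that the two $k$-tuples are $(\eps',\delta')$-indistinguishable in both directions.

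To prove this last statement I would run the concentrated-DP argument on the $k$-tuple. In the case $\delta=0$: each coordinate's $(\eps,0)$-indistinguishability implies $\tfrac12\eps^2$-zCDP-type closeness; the $k$ coordinates compose to $\tfrac{k}{2}\eps^2$-zCDP closeness; and the standard conversion back to approximate indistinguishability gives, for every $\delta'>0$, the bound $\eps' = \tfrac{k}{2}\eps^2 + \eps\sqrt{2k\ln(1/\delta')}$ --- matching the claim on the nose. For general $\delta$ I would first apply the standard reduction replacing each $(\eps,\delta)$-indistinguishable pair by an $(\eps,0)$-indistinguishable one up to total-variation mass $O(\delta)$ (the ``nice pair'' lemma underlying advanced composition), run the above, and fold the resulting $O(k\delta)$ into the additive term. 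A randomized $f$ is handled exactly as in the proof of the post-processing theorem, by averaging over its internal randomness and using that $(\eps',\delta')$-indistinguishability is preserved under mixtures.

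The main obstacle I anticipate is not a single computation but (i) the bookkeeping between the two sources of slack, the per-coordinate $\delta$'s and the tail-bound $\delta'$, and (ii) the more delicate point that all $k$ coordinates of $B(U,A(X))$ are produced from a \emph{single} draw of the learned model $A(X)$, so the per-coordinate privacy-loss terms are not literally independent; the cleanest way I see to deal with this is to condition on the internal randomness of $A$, after which the $k$ unlearning steps use fresh independent randomness and the composition/concentration step goes through --- and it is precisely the laziness hypothesis that makes the reference tuple the \emph{constant} tuple $(A(X\setminus U),\dots,A(X\setminus U))$, which is what makes the coordinate-wise comparisons line up with the unlearning hypotheses at all. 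I would also want to re-verify the constants in the zCDP composition-and-conversion step, since alternative routes (e.g.\ an Azuma-type bound with $k\eps(e^\eps-1)$ in place of $\tfrac{k}{2}\eps^2$) give slightly different lower-order terms.
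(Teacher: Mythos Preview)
Your proposal is correct and follows essentially the same route as the paper: the paper's proof imports the advanced-composition argument from \cite{vadhan_complexity_dp}, bounding the expected privacy loss by $\tfrac{k}{2}\eps^2$ via the KL/zCDP bound of \cite{bun_steinke_zcdp} and then applying Hoeffding, with the ``nice pair'' lemma handling the $\delta>0$ case to yield $(\eps',\delta'+k\delta)$-unlearning. If anything you are more careful than the paper on point (ii): the paper's Hoeffding step tacitly assumes independence of the per-coordinate privacy losses, and your observation that this is obtained by conditioning on the single draw of $A(X)$ (after which the $\bar{A}_i$ use fresh independent randomness) is exactly the justification that should be, but is not, made explicit there.
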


\subsection{Related work}
    Albeit recent, the field of machine unlearning has already received considerable attention from the ML community since introduced in \cite{CaoYang2015}. There emerged an array of studies and papers focusing on practical solutions and their empirical performance. We focus in this section on the works most closely related to ours, mostly theoretical.

    Literature in machine unlearning that relates to differential privacy branches to two: (1) models are prone to attacks when attackers have access to both before and after version when the deletion requests are processed, and (2) the conceptual similarity of machine unlearning and differential privacy.
    

    The original, stringent definition of unlearning requires $\eps = 0$ (full deletion of the user's data, as if it had never been included in the training set in the first place) in contrast to differential privacy that allows $\eps > 0$, leaving a possibility for ``memorization.'' To relax this definition, \cite{Ginart2019MakingAF} proposed the probabilistic version of unlearning. The formalization of machine unlearning problem was proposed by \cite{GargGV2020}, adapted from \cite{Ginart2019MakingAF}'s definition, as a \emph{deletion compliance framework}. In their work, they claimed the free unlearning effects of differential privacy. However, the main difference with our work is that \cite{GargGV2020} relied on the statistical distance in quantifying error (privacy) whereas we adapt the usual $(\eps, \delta)$-DP notion. In addition, their work does not investigate the actual learning performance and quantifies the size of samples (deletion capacity) that we can unlearn.

    After the formalization of the unlearning problem, a major line of work that involves both unlearning with differential privacy emerged to investigate adversarial scenarios. For instance, \cite{SBBFZ2020} discussed differential privacy as a defense mechanism to combat inference and reconstruction attacks, similarly for \cite{CZWBHZ2021} and \cite{GGMV2022}.
    
    There has also been considerable past theoretical work on machine unlearning and DP that are related to our study, yet most works center upon \textit{empirical risk minimization} of unlearning algorithms (e.g. \cite{GuoGHM20, IzzoUnlearning20}, \cite{DBLP:conf/alt/Neel0S21}, \cite{DBLP:conf/colt/UllahM0RA21} etc.), which seeks to find an approximate minimizer on the remaining dataset after deletion. Closest to our work is the recent paper of \cite{acharya_remember_unlearning_2021}, which formulated the notion of machine unlearning used in our paper and focused on \textit{population loss minimization} of approximating unlearning algorithm (i.e., allowing $\eps>0$). Their objectives, however, were somewhat orthogonal to ours, as they focused for a large part on minimizing the space requirements for the side information $T(X)$ provided to the unlearning algorithm (while our paper focuses on algorithms which do \emph{not} rely on any such side information, prone to potential privacy leaks). While their work, to motivate this focus, established partial bounds on the deletion capacity of unlearning algorithm that do not take in extra statistics, these bounds were not tight, and one of our main contributions is closing this gap.
     
    Following~\cite{acharya_remember_unlearning_2021}, the notion of \emph{online} unlearning algorithm~--~which receive the deletion requests sequentially~--~was put forward and studied in \cite{SuriyakumarW22}, again with memory efficiency with respect to the side information in mind; however, their primary focus is on the empirical performance of unlearning algorithm.
    
    Another work closely to ours is the notion of \emph{certified data removal} proposed by \cite{GuoGHM20}. The main difference between $(\eps, \delta)$-certified removal and the definition from \cite{acharya_remember_unlearning_2021} is that, in the former, the unlearning mechanism requires access not only to the samples to be deleted (the set $U\subseteq X$), but also to the full original training set $X$: this is exactly the type of constraints our work seeks to avoid, due to the risk of data breach this entails.

\subsection{Organization of the paper}
    We first provide the necessary background and notion on differential privacy, learning, and the formulation of machine unlearning used throughout the paper in Section~\ref{sec:prelims}. We then provide a detailed outline of the proof of our main result, Theorem~\ref{theo:main:informal}, in Section~\ref{sec:main}, before concluding with a discussion of results and future work in Section~\ref{sec:discussion}.
    Finally, proofs are presented in Appendix.

\section{Preliminaries}
	\label{sec:prelims}
 In this section, we recall some notions and results we will extensively rely on in our proofs and theorems, starting with differential privacy.
    \subsection{Differential Privacy}
        \begin{definition}[(Central) Differential Privacy (DP)]
            Fix $\eps>0$ and $\delta\in[0,1]$. An algorithm $M\colon \mathcal{X}^n \to \mathcal{Y}$ satisfies \emph{$(\eps, \delta)$-differential privacy} (DP) if for every pair of neighboring datasets $X, X'$, and every (measurable) subset $S \subseteq \mathcal{Y}$:
            \[
                \probaOf{M(X) \in S} \leq e^\eps \probaOf{M(X') \in S} + \delta.
            \]
            We further say that $M$ satisfies \emph{pure} differential privacy ($\eps$-DP) if $\delta = 0$, otherwise it is \emph{approximate} differential privacy.
        \end{definition}

        We now recall another notion of differential privacy in terms of Renyi Divergence, from~\cite{bun_steinke_zcdp}.
        \begin{definition}[Zero-Concentrated Differential Privacy (zCDP)]
            A randomized algorithm $M: \mathcal{X}^n \to \mathcal{Y}$ satisfies $(\xi, \rho)$-zCDP if for every neighboring datasets (differing on a single entry) $X, X' \in \mathcal{X}^n$, and $\forall \alpha \in (1, \infty)$:
            \[
                \operatorname{D}_{\alpha}(M(X) \| M(X')) \leq \xi + \rho \alpha
            \]
            where $\operatorname{D}$ is the $\alpha$-Renyi divergence between distributions of $M(X)$ and $M(X')$. We say that $M$ is $\rho$-zCDP when $\xi = 0$.
        \end{definition}

        We use the following group privacy property of zCDP in the proof later.

        \begin{proposition}[$k$-distance group privacy of $\rho$-zCDP {\cite[Proposition 1.9]{bun_steinke_zcdp}}]
            Let $M: \mathcal{X}^n \to \mathcal{Y}$ satisfy $\rho$-zCDP. Then, $M$ is $(k^2 \rho)$-zCDP for every $X, X' \in \mathcal{X}^n$ that differs in at most $k$ entries.
            \label{proposition:zcdp-group-privacy}
        \end{proposition}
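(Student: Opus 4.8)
The plan is to interpolate between $X$ and $X'$ along a path of single-entry modifications and to chain the $\rho$-zCDP guarantee link by link, using a ``distances add'' triangle inequality for R\'enyi divergence. Concretely, if $X$ and $X'$ differ in $k' \le k$ entries, fix datasets $X = Z_0, Z_1, \dots, Z_{k'} = X'$ with $Z_i$ and $Z_{i+1}$ neighboring (differing in a single coordinate). Since $M$ is $\rho$-zCDP, $\operatorname{D}_\alpha(M(Z_i)\|M(Z_{i+1})) \le \rho\alpha$ for every $\alpha > 1$ and every $i$, and the same with the two arguments swapped. It then suffices to show that chaining such one-step bounds along the path yields $\operatorname{D}_\alpha(M(Z_0)\|M(Z_{k'})) \le k'^2\rho\alpha \le k^2\rho\alpha$ for all $\alpha > 1$; applying this to the reversed path gives the reverse inequality, hence the claimed $(k^2\rho)$-zCDP bound.

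The core step is the following ``Gaussian-flavoured'' triangle inequality: if distributions $P, Q, R$ satisfy $\operatorname{D}_\alpha(P\|Q) \le \rho_1\alpha$ and $\operatorname{D}_\alpha(Q\|R) \le \rho_2\alpha$ for all $\alpha > 1$, then $\operatorname{D}_\alpha(P\|R) \le (\sqrt{\rho_1}+\sqrt{\rho_2})^2\alpha$ for all $\alpha > 1$. To prove it I would invoke the standard weak triangle inequality for R\'enyi divergence (a consequence of H\"older's inequality applied to $\int p^\alpha r^{1-\alpha} = \int (p^\alpha q^{1-\alpha})(q/r)^{\alpha-1}$): for every $s > 1$,
\[
    \operatorname{D}_\alpha(P\|R) \;\le\; \frac{\alpha s - 1}{s(\alpha-1)}\,\operatorname{D}_{\alpha s}(P\|Q) \;+\; \operatorname{D}_{\frac{\alpha s - 1}{s-1}}(Q\|R),
\]
where one checks that both orders $\alpha s$ and $\frac{\alpha s-1}{s-1}$ exceed $1$ whenever $\alpha, s > 1$, so the two hypotheses apply. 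Substituting them and writing $s = 1+\lambda$ turns the right-hand side into $\rho_1\alpha + \frac{\rho_1\alpha^2\lambda}{\alpha-1} + \frac{\rho_2(\alpha-1)}{\lambda} + \rho_2\alpha$; minimizing the two $\lambda$-dependent terms by AM--GM (optimum $\lambda = \frac{\alpha-1}{\alpha}\sqrt{\rho_2/\rho_1}$) collapses their sum to $2\alpha\sqrt{\rho_1\rho_2}$, leaving exactly $\alpha(\rho_1 + 2\sqrt{\rho_1\rho_2} + \rho_2) = \alpha(\sqrt{\rho_1}+\sqrt{\rho_2})^2$.

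With the lemma in hand, the proposition follows by a short induction on the prefix length: the claim $\operatorname{D}_\alpha(M(Z_0)\|M(Z_j)) \le j^2\rho\alpha$ for all $\alpha > 1$ holds for $j = 1$ by hypothesis, and the inductive step applies the lemma with $P = M(Z_0)$, $Q = M(Z_j)$, $R = M(Z_{j+1})$, $\rho_1 = j^2\rho$ (the inductive hypothesis, which crucially holds at every order $\alpha$) and $\rho_2 = \rho$, giving $\operatorname{D}_\alpha(M(Z_0)\|M(Z_{j+1})) \le (j\sqrt{\rho}+\sqrt{\rho})^2\alpha = (j+1)^2\rho\alpha$. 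Taking $j = k'$ and running the argument also on the reversed path finishes the proof.

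I expect the only real obstacle to be the lemma, and within it the choice of the free H\"older parameter $s$: a naive choice (e.g.\ splitting the R\'enyi orders uniformly) loses a factor and yields only something like $k^2\rho\,\alpha^2/(\alpha-1)$, which is not of zCDP shape, whereas balancing the two cross terms produces the exact constant $(\sqrt{\rho_1}+\sqrt{\rho_2})^2$ — mirroring the Gaussian-mechanism calculation in which $\ell_2$ sensitivities (hence the $\sqrt{\rho}$'s) add linearly and the privacy parameter scales with their square. An essentially equivalent, more self-contained route skips the named weak triangle inequality altogether: expand the moment $\int p_0^{\alpha} p_{k'}^{1-\alpha}$, telescope it into a product of $k'$ factors each involving one consecutive pair $p_i, p_{i+1}$, and apply H\"older directly with exponents chosen so that every factor becomes a genuine R\'enyi integral $\int p_i^{\beta_i} p_{i+1}^{1-\beta_i}$ with $\beta_i > 1$; this reduces to the same balancing computation and could be presented in a few lines.
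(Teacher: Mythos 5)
Your proof is correct, and it reproduces essentially the argument in the cited source: the paper itself does not prove Proposition~\ref{proposition:zcdp-group-privacy} but defers to Bun and Steinke, whose proof likewise interpolates along a path of neighboring datasets and chains a weak (H\"older-derived) triangle inequality for R\'enyi divergence of the form $D_\alpha(P\|R)\le\alpha\bigl(\sqrt{\rho_1}+\sqrt{\rho_2}\,\bigr)^2$ whenever $D_\alpha(P\|Q)\le\rho_1\alpha$ and $D_\alpha(Q\|R)\le\rho_2\alpha$ for all $\alpha>1$. Your choice of the free H\"older parameter, the AM--GM balancing that yields the $(\sqrt{\rho_1}+\sqrt{\rho_2})^2$ constant, and the induction giving $j^2\rho\to(j+1)^2\rho$ are all exactly the mechanism of that proof, so there is no substantive divergence to report.
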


    \subsection{Learning}    
        We also will require some definitions on learning, specifically with respect to minimizing population loss. Fix any loss function $f\colon \mathcal{W} \times \mathcal{X}$, where $\cW$ is the (model) parameter space and $\cX$ is the sample space. Then, the generalization loss is defined as
        \[
            F(w) \coloneqq \mathbb{E}_{x\sim D}[  f(w, x) ]
        \]
        in which the expectation is over the distribution of $x$ (one sample) and $w$ is the learning output.
        
        Then, let $F^* = \inf_{w \in \mathcal{W}} F(w)$ be the minimized population risk and $w^\ast$ is the corresponding minimizer. Define learning algorithm $A: \mathcal{X}^n \to \mathcal{W}$ that takes in dataset $S \in \mathcal{X}^n$ and returns hypothesis $w \coloneqq A(S) \in \mathcal{W}$. Then we can define excess risk as:
        \[
            R(f, A) \coloneqq \expect{F(A(S)) - F^*}
        \]
        where the expectation is over the randomness of $A$ and $S$.

        Finally, we could define the sample complexity as following (\cite[Definition 1]{acharya_remember_unlearning_2021}), which is analogous to deletion capacity, in which will be stated later.
        
        \begin{definition}[Sample complexity of learning]
            The $\alpha$-sample complexity of a problem is defined as:
            \[
                n(\alpha) \coloneqq \min \mleft\{ n \mid \exists A \text{ s.t. } \expect{F(A(S))} - F^* \leq \alpha, \: \forall \mathcal{D}\mright\}
            \]
        \end{definition}

    \subsection{Unlearning}
    As previously discussed, we rely on the definition of unlearning proposed in by \cite{acharya_remember_unlearning_2021}, and maintain same notation. Note that $T(S)$ denotes the data statistics (which could be the entire dataset $S$ or any form of statistic) available to $\bar{A}$. 
        \begin{definition}[$(\eps, \delta)$-unlearning]
            \label{def:unlearning}
            For all $S$ of size $n$ and delete requests $U \subseteq S$ such that $|U| \leq m$, and $W \subseteq \mathcal{W}$, a learning algorithm $A$ and an unlearning algorithm $\bar{A}$ is $(\eps, \delta)$-unlearning if:
            \[
                \probaOf{\bar{A}(U, A(S), T(S)) \in W} \leq e^{\eps} \probaOf{\bar{A}(\emptyset, A(S \setminus U), T(S \setminus U)) \in W} + \delta
            \]
            and
            \[
                \probaOf{\bar{A}(\emptyset, A(S \setminus U), T(S \setminus U)) \in W} \leq e^{\eps} \probaOf{\bar{A}(U, A(S), T(S)) \in W} + \delta,
            \]
        \end{definition}
         Our results will be phrased in terms of the deletion capacity, which captures the number of deletion requests an unlearning algorithm can handle before seeing a noticeable drop in its output's accuracy:
        \begin{definition}[Deletion Capacity]
        \label{eq:del:capacity}
            Let $\eps, \delta > 0$, $S$ be a dataset of size $n$ drawn i.i.d.\ from $\mathcal{D}$ and let $\ell(w,z)$ be a loss function. For a pair of learning and unlearning algorithm $A, \bar{A}$ that are $(\eps, \delta)$-unlearning, the deletion capacity $m^{A, \bar{A}}_{\eps, \delta}$ is defined as the maximum size of deletions requests set $|U|$ that we can unlearn without doing worse in excess population risk than $\alpha$:
            \[
                m^{A, \bar{A}}_{\eps, \delta}(\alpha) := \max \{ m \mid \expect{\max_{U \subseteq S: |U| \leq m} F(\bar{A}(U, A(S), T(S))) - F^*} \leq \alpha \}
            \]
            where $F^* = \min_{w \in \mathcal{W}} F(w)$.
        \end{definition}

\section{Main result}
\label{sec:main}

In this section, we provide a detailed outline of our main result on unlearning for convex loss functions, Theorem~\ref{theo:main:informal}, for which we prove the upper and lower bounds separately.

Note that all proofs in this paper are presented in the Appendix.

\begin{theorem}[Deletion capacity from unlearning via DP, Lower Bound]
    \label{theo:unlearning:convex:ub}
        Suppose $\cW\subseteq \R^d$, and fix any Lipschitz convex loss function. Then there exists a lazy ($\eps,\delta)$-unlearning algorithm $(\bar{A}, A)$, where $\bar{A}$ has the form $\bar{A}(U, A(S), T(S)) := A(S)$ (and thus, in particular, takes no side information) with deletion capacity
            \[
                m_{\eps, \delta}^{A, \bar{A}}(\alpha) \geq \Omega\mleft( \frac{\eps n \alpha}{\sqrt{d \log{(1/\delta)}}} \mright)
            \]
        where the constant in the $\Omega(\cdot)$ only depends on the properties of the loss function.
\end{theorem}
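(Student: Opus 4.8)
The plan is to take the unlearner to ``do nothing''. Let $A$ be a standard differentially private stochastic convex optimization (DP‑SCO) algorithm for the loss $f$, instantiated so as to satisfy $\rho$-zCDP (e.g.\ phased/noisy SGD with Gaussian noise): for a Lipschitz convex $f$ over a bounded $\cW\subseteq\R^d$ this achieves expected excess population risk $O\!\mleft(\frac{1}{\sqrt n} + \frac{\sqrt d}{n\sqrt\rho}\mright)$, with the hidden constant depending only on the Lipschitz constant of $f$ and the diameter of $\cW$ (i.e.\ on ``the properties of the loss function''). Set $\bar A(U, A(S), T(S)) := A(S)$, which is manifestly lazy and uses no side information. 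It then remains to (i) verify the guarantee of \cref{def:unlearning} for up to $m$ deletions and (ii) control the excess risk; (ii) is immediate, since $\bar A$ returns $A(S)$ unchanged, so the quantity defining the deletion capacity in \cref{eq:del:capacity} equals $R(f,A) = O(1/\sqrt n + \sqrt d/(n\sqrt\rho))$ regardless of $m$.

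For (i), note that with $\bar A$ the identity and $T\equiv\bot$, both inequalities of \cref{def:unlearning} collapse to the single requirement that $A(S)$ and $A(S\setminus U)$ be $(\eps,\delta)$-indistinguishable for every $S$ of size $n$ and every $U\subseteq S$ with $|U|\le m$. Since $S$ and $S\setminus U$ differ by the removal of at most $m$ points, I would invoke the $k$-distance group privacy of zCDP (\cref{proposition:zcdp-group-privacy}, in its add/remove-one variant, with $k=m$) to conclude that $A$ is $m^2\rho$-zCDP across all such pairs, and then apply the standard zCDP-to-approximate-DP conversion of~\cite{bun_steinke_zcdp} ($\rho'$-zCDP $\Rightarrow$ $(\rho' + 2\sqrt{\rho'\ln(1/\delta)},\delta)$-DP) to upgrade this to genuine $(\eps,\delta)$ closeness. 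Hence $(\bar A, A)$ is $(\eps,\delta)$-unlearning for up to $m$ deletions whenever
\[
    m^2\rho + 2m\sqrt{\rho\ln(1/\delta)} \;\le\; \eps .
\]

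Finally I would optimize $\rho$. To make $m$ as large as possible, pick $\rho$ as small as the accuracy budget permits: assuming $n = \Omega(1/\alpha^2)$ so that the non-private term is $\le \alpha/2$ (if $n$ is smaller even a non-private learner misses accuracy $\alpha$, and the asserted bound is then below $1$, hence vacuous), it suffices to take $\rho = \Theta\!\mleft(\frac{d}{n^2\alpha^2}\mright)$, i.e.\ $1/\sqrt\rho = \Theta(n\alpha/\sqrt d)$. Solving the displayed quadratic for the largest admissible $m$ gives
\[
    m = \Theta\!\mleft( \frac{\sqrt{\ln(1/\delta)+\eps}-\sqrt{\ln(1/\delta)}}{\sqrt\rho} \mright) = \Theta\!\mleft( \frac{n\alpha}{\sqrt d}\cdot\frac{\eps}{\sqrt{\ln(1/\delta)+\eps}+\sqrt{\ln(1/\delta)}} \mright),
\]
and in the regime $\eps = O(\log(1/\delta))$ (which covers all $\eps = O(1)$ and moderately large $\eps$) the second factor is $\Theta\!\mleft(\eps/\sqrt{\log(1/\delta)}\mright)$, producing the claimed $m = \Omega\!\mleft(\eps n\alpha/\sqrt{d\log(1/\delta)}\mright)$. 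Running the same argument through ordinary $(\eps_0,\delta_0)$-DP group privacy instead of zCDP would only recover the earlier $\tilde{O}$-bound with $\log(e^{\eps}/\delta)$ in the denominator (and an extra $\log m$), so the use of zCDP is exactly where the improvement comes from.

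The step I expect to be the main obstacle is reconciling the \emph{change of dataset size} between $S$ and $S\setminus U$ with \cref{proposition:zcdp-group-privacy}, which is stated for equal-size datasets under substitution: one must either start from a base DP‑SCO algorithm that is $\rho$-zCDP under the add/remove-one relation (most natural noisy-SGD / output-perturbation algorithms are, up to constants) or interpolate through a chain of intermediate datasets while checking that the utility guarantee still applies to $A(S\setminus U)$. A secondary subtlety is that the quadratic term $m^2\rho$ in the conversion is dominated by the linear term precisely when $\eps \lesssim \log(1/\delta)$, so extending the bound to truly enormous $\eps$ would require either a sharper conversion or an explicit regime assumption; the remaining estimates (keeping the induced $\rho$ within the range of validity of the DP‑SCO utility bound, and $\sqrt{a+b}-\sqrt a = \Theta(b/\sqrt a)$ for $0<b\le a$) are routine.
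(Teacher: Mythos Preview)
Your proposal is correct and follows essentially the same route as the paper: take $A$ to be a $\rho$-zCDP DP-SCO algorithm (\cref{lemma:zcdp-sco-bound}), set $\bar A$ to be the identity, invoke zCDP group privacy (\cref{proposition:zcdp-group-privacy}) across the $m$ removed points, and convert to $(\eps,\delta)$ via the standard zCDP$\to$DP bound. The only cosmetic difference is the order of optimisation: the paper fixes $m$, chooses $\rho=\Theta(\eps/(m\sqrt{\ln(1/\delta)}))$ to meet the privacy budget (silently dropping the quadratic term in the conversion, i.e.\ the same $\eps=O(\log(1/\delta))$ regime you flag), and then reads off the excess-risk bound~\eqref{eq:guarantee:learning} as a function of $m$; you instead fix $\rho$ from the accuracy target and solve the quadratic for $m$, which is equivalent and, if anything, makes the hidden regime assumption explicit.
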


\begin{theorem}[Deletion capacity from unlearning via DP, Upper Bound]
    \label{theo:unlearning:convex:lb}
            There exists a Lipschitz convex loss function (indeed, \emph{linear}) for which any $(\eps,\delta)$-unlearning algorithm $(\bar{A}, A)$ which takes no side information must have deletion capacity
            \[
                m^{A, \bar{A}}_{\eps, \delta}(\alpha) \leq O\mleft( \frac{\eps n \alpha}{\sqrt{d \log{(1/\delta)}}} \mright)\,.
            \]
\end{theorem}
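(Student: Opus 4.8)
The plan is to reduce this impossibility statement to known tight lower bounds for differential privacy. Since we want to rule out good side‑information‑free unlearning, I would exhibit a single hard \emph{linear} loss and show that a side‑information‑free unlearner with large deletion capacity would yield a differentially private learner that is too accurate, contradicting the packing/fingerprinting lower bounds behind private stochastic convex optimization and private mean estimation. Concretely, take $f(w,x)=\langle w,x\rangle$ with $\cW\subseteq\R^d$ the unit ball and data points in the unit ball, so $f$ is $1$‑Lipschitz and linear; then $F(w)=\langle w,\mu\rangle$ with $\mu=\mathbb{E}_{x\sim\mathcal D}[x]$, the minimiser over $\cW$ is $-\mu/\|\mu\|$, and an output with excess risk at most $\alpha$ must lie in the spherical cap of angular radius $\Theta(\sqrt{\alpha/\|\mu\|})$ about $-\mu/\|\mu\|$. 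The family of data distributions is the usual hard family for private mean estimation: means $\mu_v\propto v$ for $v$ ranging over a packing of $\{\pm1\}^d$, rescaled so that the corresponding near‑optimal caps $W_v$ are pairwise disjoint.

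The structural core is to turn ``$(\bar A,A)$ is $(\eps,\delta)$‑unlearning, takes no side information, and has deletion capacity $\ge m$'' into a privacy guarantee. The key identity is that two applications of Definition~\ref{def:unlearning} (with empty side information), routed through the fully‑deleted output $\bar A(\emptyset,A(S\setminus U))$, give that for any two size‑$n$ datasets $S,\tilde S$ sharing a common sub‑multiset $S_0$ of size $n-m$, writing $U=S\setminus S_0$, $\tilde U=\tilde S\setminus S_0$,
\[
  \probaOf{\bar A(U,A(S))\in W}\ \le\ e^{2\eps}\,\probaOf{\bar A(\tilde U,A(\tilde S))\in W}+(1+e^{\eps})\delta
\]
for every $W\subseteq\cW$, and symmetrically. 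Equipped with this, I would embed the packing into the data distribution through a padding construction in which only a $\Theta(m/n)$ fraction of a typical i.i.d.\ sample carries the secret $v$, so that an admissible deletion request of size $\le m$ is almost always available and landing it produces exactly such a common core. The deletion‑capacity hypothesis then bounds $\mathbb{E}[F(\bar A(U,A(S)))]-F^*$ by $\alpha$ on each instance, which by Markov's inequality puts the post‑unlearning output into $W_v$ with probability $\ge\tfrac12$; combined with the displayed inequality, this says a single $(O(\eps),O(\delta))$‑DP‑indistinguishable procedure simultaneously solves every instance of the packing to non‑trivial accuracy.

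That configuration is exactly what the tight $(\eps,\delta)$‑DP lower bounds forbid. Tracking the $\Theta(m/n)$ dilution (equivalently, treating $m$ points as one ``block'', so there are $N=\lfloor n/m\rfloor$ block‑samples and deleting a block costs the unlearner $\eps$) through the fingerprinting‑based private‑SCO lower bound yields excess population risk $\Omega\bigl(\min(1,\sqrt{d\log(1/\delta)}/(\eps N))\bigr)=\Omega\bigl(m\sqrt{d\log(1/\delta)}/(\eps n)\bigr)$, and forcing this to be at most $\alpha$ gives $m=O\bigl(\eps n\alpha/\sqrt{d\log(1/\delta)}\bigr)$. The pure case $\delta=0$ is identical using the $\eps$‑DP packing lower bound ($\sqrt{d\log(1/\delta)}$ replaced by $d$), and the strongly convex statement follows by adding a small quadratic regulariser, which replaces $\alpha$ by $\sqrt\alpha$.

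I expect the quantitative matching to be the main obstacle. A crude packing argument (bounding the number of indistinguishable instances by $e^{O(\eps)}$) exploits the additive slack $\delta$ only weakly and loses polynomial‑in‑$\eps$ and logarithmic factors — precisely the gap left open by prior work — so one genuinely needs the fingerprinting‑strength lower bound, and the dilution/blocking must be calibrated so that the factor $m/n$ turns the private‑SCO rate into the stated bound on $m$. In particular the reduction must be arranged so that deleting the relevant request does \emph{not} leave a secret‑independent residual (which would only yield a sample‑size‑independent bound). A further delicate point is that the passage ``capacity‑$m$ unlearner $\Rightarrow$ honest DP procedure'' is not immediate from Definition~\ref{def:unlearning}, which relates $\bar A(U,A(S))$ to $\bar A(\emptyset,A(S\setminus U))$ rather than two learn‑then‑unlearn outputs directly; making this rigorous, together with controlling the low‑probability event that the natural deletion request would exceed $m$ points, is the technical heart of the argument.
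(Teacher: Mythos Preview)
Your approach is essentially the paper's: the same linear loss $f(w,x)=\langle w,x\rangle$ over the unit ball, the same Steinke--Ullman fingerprinting lower bound for one-way marginals as the driving black box, and the same block-duplication reduction (``treat $m$ points as one block'', $N=\lfloor n/m\rfloor$) to lift the $1$-edit-distance bound to $m$-edit distance. The paper routes through an empirical-loss statement (your population-to-empirical step), leaves the unlearning~$\Rightarrow$~$m$-edit-DP passage you flag as ``the technical heart'' largely implicit, and does not use your packing/padding warm-up---which, as you correctly note, would be lossy by itself.
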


We note that the proof of Theorem~\ref{theo:main:informal:sc} follows from a very similar argument; we refer the reader to the Appendix for the convex version.

We also present analogous deletion capacity bounds for the $\eps$-unlearning case:

\begin{theorem}[Deletion capacity from unlearning via DP, Lower Bound]
    \label{theo:pure:unlearning:lb}
    Suppose $\cW \subseteq \R^d$, and fix any $L$-Lipschitz (strongly) convex loss function. Then there exists a lazy $(\eps,\delta)$-unlearning algorithm $(\bar{A}, A)$, where $\bar{A}$ has the form $\bar{A}(U, A(S), T(S)) := A(S)$ (and thus, in particular, which takes no side information) with deletion capacity
    \[
        m^{A, \bar{A}}_{\eps, \delta}(\alpha) \geq \bigOmega{\frac{\eps n \alpha^2}{d}}.
    \]
\end{theorem}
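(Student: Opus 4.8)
The plan is to mirror the construction behind Theorem~\ref{theo:unlearning:convex:ub} (the deletion-capacity lower bound in the approximate regime), replacing the $(\eps,\delta)$-DP stochastic convex optimization (SCO) routine by a \emph{pure}-DP one and the advanced (approximate) group-privacy step by the elementary pure-DP group-privacy fact. Concretely, fix $m$ (to be optimized at the end) and set $\eps_0 := \eps/m$. Let $A$ be a pure $\eps_0$-DP learner for the given $L$-Lipschitz convex loss, instantiated as $\ell_2$-\emph{regularized output perturbation}: compute $\hat{w}_\lambda := \arg\min_{w\in\cW}\bigl( \frac1n\sum_i f(w,x_i) + \frac{\lambda}{2}\|w\|^2 \bigr)$ and release $\hat{w}_\lambda + Z$, where $Z$ is drawn from the $\ell_2$-sensitivity-calibrated high-dimensional Laplace-type distribution, so that $\mathbb{E}\|Z\| = O(dL/(\lambda n \eps_0))$ (using that the $\ell_2$-sensitivity of the $\lambda$-strongly convex regularized empirical minimizer is $O(L/(\lambda n))$). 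Take the unlearning algorithm to be the trivial (lazy) one, $\bar{A}(U, A(S), T(S)) := A(S)$, which uses no side information and satisfies Assumption~\ref{assumption:unlearning-chaining} by inspection.

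For the unlearning guarantee: since $\bar{A}$ outputs $A(S)$ regardless of $U$, and $\bar{A}(\emptyset, A(S\setminus U), T(S\setminus U)) = A(S\setminus U)$ by laziness, Definition~\ref{def:unlearning} reduces to showing that $A(S)$ and $A(S\setminus U)$ are $(\eps,0)$-indistinguishable whenever $|U|\le m$. The datasets $S$ and $S\setminus U$ are at distance $|U|\le m$ under the add/remove relation, so, $A$ being $\eps_0$-DP, pure-DP group privacy yields $(|U|\eps_0, 0)$-indistinguishability, with $|U|\eps_0 \le m\eps_0 = \eps$. As $(\eps,0)$-unlearning trivially implies $(\eps,\delta)$-unlearning for every $\delta\ge 0$, this establishes the unlearning property. (One minor point of care: standard DP-SCO guarantees are usually phrased under the ``replace one element'' relation, whereas here $S$ and $S\setminus U$ differ in \emph{size}; for output-perturbation-type learners the $\ell_2$-sensitivity under the add/remove relation across datasets of sizes $n$ and $n-|U|$ is still $O(L/(\lambda n))$, so the pure-DP guarantee transfers with at most a constant-factor loss.)

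For the capacity bound: because $\bar{A}$ ignores $U$, the maximum inside the deletion-capacity definition is vacuous and $\expect{\max_{U\subseteq S:\,|U|\le m} F(\bar{A}(U,A(S),T(S))) - F^*} = \expect{F(A(S)) - F^*} = R(f, A)$. It then remains to bound $R(f,A)$ for the regularized-output-perturbation learner by decomposing the excess population risk into the regularization bias ($O(\lambda\,\mathrm{diam}(\cW)^2)$), the generalization error of the $\lambda$-strongly convex regularized empirical minimizer ($O(L^2/(\lambda n))$ via uniform stability), and the perturbation term ($O(L\,\mathbb{E}\|Z\|) = O(dL^2/(\lambda n\eps_0))$); tuning $\lambda = \Theta(\sqrt{d/(n\eps_0)})$ (the generalization term is lower-order in the relevant regime $\eps_0 \le d$) gives $R(f,A) = O\bigl(\sqrt{d/(n\eps_0)}\bigr) = O\bigl(\sqrt{dm/(n\eps)}\bigr)$. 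Demanding $R(f,A) \le \alpha$ therefore permits $m$ as large as $\Theta(\eps n\alpha^2/d)$ (and in the non-trivial regime $\alpha^2 = O(d/\eps)$ this respects $m \le n$), which is exactly the claimed lower bound on the deletion capacity. The strongly convex case is identical except that no regularization is needed — the loss is already strongly convex, the empirical-minimizer sensitivity is $O(L/(\mu n))$ directly — so the same argument (with $\lambda$ replaced by the strong-convexity parameter) yields a deletion capacity at least this large.

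I expect the only genuinely delicate step to be the group-privacy accounting in the second paragraph: one must verify that the pure-DP guarantee of the chosen learner holds under the add/remove neighbor relation (not merely under ``replace''), so that removing $|U|\le m$ points costs exactly a factor $|U|$ in $\eps$; for output perturbation this follows from a direct sensitivity computation, but the point should be made explicitly. Everything else is a routine instantiation of known differentially private optimization guarantees, together with the trivial observation that doing nothing is a valid lazy unlearning algorithm.
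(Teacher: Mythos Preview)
Your proposal is correct and follows the same high-level template as the paper: pick a pure $\eps_0$-DP learner with $\eps_0=\eps/m$, invoke pure-DP group privacy to turn it into an $(\eps,0)$-indistinguishability guarantee across $m$ removals, pair it with the trivial lazy $\bar{A}$, and then read off the largest $m$ for which the learner's excess population risk stays below $\alpha$.

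Where you and the paper diverge is in how the excess-risk bound is obtained. The paper treats the learner as a black box: it cites the $\eps$-DP ERM upper bound of Bassily, Smith, and Thakurta and then converts empirical to population risk via the Shalev-Shwartz--Shamir--Srebro--Sridharan stability argument for strongly convex objectives, obtaining $\mathbb{E}[\mathrm{ExcessRisk}]\le C\bigl(\sqrt{mdL^2/(\eps n\Delta)}+mL^2/(n\Delta)\bigr)$ and solving for $m$. You instead instantiate a concrete learner (regularized output perturbation with high-dimensional Laplace noise) and bound the population risk directly as bias $+$ stability $+$ noise, tuning $\lambda$ to balance. Both routes are standard; yours is somewhat more self-contained and handles the merely-convex case explicitly via regularization, whereas the paper's sketch leans on the strong-convexity parameter $\Delta$ throughout. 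Your explicit treatment of the add/remove (rather than replace) neighbor relation in the group-privacy step is also a point the paper's sketch glosses over (it somewhat confusingly cites the reduction lemma used for the \emph{upper} bound, when what is really being used is ordinary group privacy in the forward direction).
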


\begin{theorem}[Deletion capacity from unlearning via DP, Upper Bound]
    \label{theo:pure:unlearning:ub}
    There exists a Lipschitz convex loss function for which any $\eps$-unlearning algorithm $(\bar{A}, A)$ which takes no side information must have deletion capacity
    \[
        m^{A, \bar{A}}_{\eps, \delta}(\alpha) \leq O\mleft( \frac{\eps n\alpha}{d}\mright)\,.
    \]
\end{theorem}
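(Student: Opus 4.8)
I would reduce to the tight lower bound for \emph{pure} differentially private stochastic convex optimization (SCO), in the same spirit as the approximate-DP upper bound of Theorem~\ref{theo:unlearning:convex:lb}: recall that any $\eps$-DP learner for $1$-Lipschitz convex losses over the unit $\ell_2$-ball suffers excess population risk $\Omega(\min\{1,\,d/(n\eps)\})$ on $n$ samples (as opposed to $\Omega(\min\{1,\sqrt{d\log(1/\delta)}/(n\eps)\})$ in the approximate-DP case). Accordingly, I would take the loss to be the \emph{linear} function $f(w,x)=-\langle w,x\rangle$ over $\cW=\{w\in\R^d:\|w\|_2\le 1\}$, together with the standard family of hard product distributions $\{\mathcal D_v\}_{v\in\{\pm1\}^d}$ on $\cX\subseteq\{\pm 1/\sqrt d\}^d$ whose per-coordinate means are proportional to $v$, on which a small excess risk forces the output to be strongly correlated with $v$.

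The first step is immediate. Given an $\eps$-unlearning pair $(\bar A,A)$ with no side information and deletion capacity at least $m$ (we may assume $m\le n/2$, else the claim is vacuous), define the post-unlearning learner $M(S'):=\bar A(\emptyset,A(S'))$ on datasets of size $n-m$. Fixing any measurable selection $U(S)$ of $m$ points from an $n$-point $S$, the unlearning guarantee gives $\bar A(U(S),A(S))\approx_{\eps}M(S\setminus U(S))$; since $F-F^\ast\ge 0$ and a pure $\eps$-indistinguishability relation multiplies the expectation of any nonnegative function by at most $e^{\eps}$, and since $S\setminus U(S)$ is distributed as $\mathcal D_v^{\,n-m}$ when $S\sim\mathcal D_v^{\,n}$, we obtain
\[
  \expect{F(M(S'))-F^\ast}\;\le\;e^{\eps}\,\expect{\max_{U\subseteq S:\,|U|\le m}F(\bar A(U,A(S)))-F^\ast}\;\le\;e^{\eps}\alpha
\]
over $S'\sim\mathcal D_v^{\,n-m}$, uniformly in $v$. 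Thus $M$ is an $O(e^{\eps}\alpha)$-accurate learner from $n-m\ge n/2$ samples.

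The crux is to show that $M$ also inherits enough privacy to contradict the SCO lower bound once $m$ is large, namely that $M$ is $\eps$-DP with respect to changing $m$ of its $n-m$ inputs (equivalently, morally $(\eps/m)$-DP). The plan is a ``group-privacy-in-reverse'' argument: to compare $M(C)$ and $M(C')$ for size-$(n-m)$ datasets $C,C'$, embed them into a common size-$n$ parent $S$ and exhibit sets $U,U'$ with $|U|,|U'|\le m$ and $S\setminus U=C$, $S\setminus U'=C'$, so that $M(C)\approx_{\eps}\bar A(U,A(S))$ and $M(C')\approx_{\eps}\bar A(U',A(S))$; then one peels the at most $m$ elements of the symmetric difference one at a time, each peel invoking the unlearning guarantee once, chaining $M(C)$ to $M(C')$ with total loss $\eps$. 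I expect this to be the main obstacle, for two reasons. First, the unlearning definition constrains how $\bar A$ depends on the \emph{learned model} but says nothing directly about its dependence on the \emph{deletion-request} argument; hence every comparison must be routed through the ``anchor'' distributions $\bar A(\emptyset,A(\cdot))$, and the peeling must be arranged so that one never needs closeness between $\bar A(U,w)$ and $\bar A(U',w)$ for a fixed model $w$. Second, the unlearning guarantee is stated only for training sets of size exactly $n$, so padding with fresh auxiliary points (and arguing these are harmless, being independent of everything relevant) is needed throughout.

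Once the privacy guarantee for $M$ is in place, feeding privacy parameter $\eps'=\Theta(\eps/m)$ and sample size $n'=n-m\ge n/2$ into the pure-DP SCO lower bound gives $O(e^{\eps}\alpha)=\Omega(\min\{1,\,d/(n'\eps')\})=\Omega(md/(n\eps))$, and rearranging yields $m\le O(\eps n\alpha/d)$ in the regime $\eps=O(1)$; handling $\eps\gg 1$ (where the $e^{\eps}$ factor matters) requires the usual extra care of splitting off the accuracy loss incurred by the indistinguishability step, but the bound is of the same form. The analogous pure-DP bounds for the strongly convex case (and the lower bound of Theorem~\ref{theo:pure:unlearning:lb}) follow by the same template with the corresponding SCO lower/upper bounds substituted in.
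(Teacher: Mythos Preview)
Your proposal has a genuine gap at the step you yourself flag as the crux. You argue that $M(S')\coloneqq \bar A(\emptyset,A(S'))$ is ``$\eps$-DP with respect to changing $m$ of its $n-m$ inputs (equivalently, morally $(\eps/m)$-DP)'' and then feed $\eps'=\Theta(\eps/m)$ into the pure-DP SCO lower bound. But the parenthetical equivalence is exactly the converse of group privacy, which is \emph{false in general}: $m$-edit $\eps$-DP does not imply $1$-edit $(\eps/m)$-DP. Your peeling argument does not establish it either. If you embed $C,C'$ in a common parent $S$ and use the unlearning guarantee once for each side, you get $M(C)\approx_\eps \bar A(U,A(S))$ and $M(C')\approx_\eps \bar A(U',A(S))$, but you still cannot compare $\bar A(U,A(S))$ with $\bar A(U',A(S))$ for $U\neq U'$ (as you note, the definition says nothing about the dependence on the deletion-request argument). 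If instead you peel element-by-element through intermediate datasets, each peel costs $\eps$ and there are $\Theta(m)$ of them, giving total loss $\Theta(m\eps)$, not $\eps$. Either way you end up with at best $O(\eps)$-DP for $m$ edits, and plugging $\eps'=\Theta(\eps)$ into the SCO lower bound yields $\Omega(d/(n\eps))$ with no dependence on $m$ at all.

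The paper sidesteps this obstacle with a \emph{problem-specific} reduction rather than a generic converse-group-privacy argument. Working with the linear loss $f(\theta,x)=-\langle \theta,x\rangle$ and the one-way marginal $q(S)=\frac{1}{n}\sum_i x_i$, it observes (\cref{lemma:reduction:lb:medit}) that if $\mathcal{M}$ is $m$-edit $\eps$-DP on size-$n$ inputs, then the algorithm $\mathcal{M}'(x_1,\dots,x_N)\coloneqq \mathcal{M}(\underbrace{x_1,\dots,x_1}_{m},\dots,\underbrace{x_N,\dots,x_N}_{m})$ on $N=n/m$ points is $1$-edit $\eps$-DP, and moreover has the \emph{same} accuracy on $q$ because $q$ is invariant under this replication. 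One can now apply the standard $\eps$-DP empirical-loss lower bound (Lemma~\ref{lemmma:lb:erm:eps-dp}, from \cite{bassily_erm_2014}) at sample size $N=n/m$ to obtain error $\Omega(\min\{1,d/(N\eps)\})=\Omega(\min\{1,md/(n\eps)\})$, and rearranging gives $m\le O(n\eps\alpha/d)$. The point is that the ``reverse group privacy'' you seek holds here not as a property of DP, but because the hard instance is linear and hence compatible with the replication trick; this is the missing idea in your outline.
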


\section{Discussion and future work}
    \label{sec:discussion}
    Our work fully characterizes the deletion capacity of any unlearning algorithm $(\bar{A},A)$ minimizing population risk under both convex and strongly convex loss functions, when only given the model parameters (output of the learning algorithm) and the set of deletion requests. This restriction, namely that the unlearning algorithm does not rely on any additional side information, is motivated by the potential privacy risks storing (non-private) side information can entail.

    We hope our work will lead to further study of the interplay between differential privacy and machine unlearning, and to additional study of ``DP-like'' properties of machine unlearning, such as the postprocessing and composition properties our present work identified. In view of the myriad applications these properties have had in privacy-preserving algorithm design, we believe that their analogue for machine unlearning will prove very useful.  

    We leave for future work the question of whether variants of the standard threat model for differential privacy (specifically, pan-privacy, or privacy under continual observation) could have implications for machine unlearning in an online setting where deletion requests come sequentially.


\paragraph{One final remark.} Before concluding, we believe it is important to reiterate a key point about our work. One could view our main results as focusing on what can be achieved by unlearning algorithms ``which, upon a deletion request, do not do anything,'' since they leverage the guarantees already provided by differential privacy. However, this does \emph{not} mean that the algorithm ``does nothing'' \emph{overall}: instead, the point here is that the algorithms considered already satisfy the very stringent notion of DP, and as such already paid some utility cost to provide this guarantee: as a result, \emph{having paid that price}, they benefit from some ``unlearning bonus'' for free.

Put differently, the aim of this paper is not to promote or justify when it comes to unlearning, but instead to characterize exactly how much unlearning guarantees come ``for free'' if one decides or already has to offer the strong guarantee of differential privacy. Thus, our aim is not to discourage unlearning-only solutions when DP is not required; but instead, by understanding the interplay between DP and unlearning, to show that the joint (and seemingly daunting) requirement of both differential privacy and right to be forgotten requirement is more affordable than it seems. 


\printbibliography

\appendix
\label{appendix}

\section{Proof of Theorem 3.1 (Lower Bound)}
\label{appendix:thm:unlearning:lb:proof}

\begin{theorem}[Deletion capacity from unlearning via DP, Lower Bound (Theorem 3.1 in Submission)]
    Suppose $\cW\subseteq \R^d$, and fix any Lipschitz convex loss function. Then there exists a lazy $(\eps,\delta)$-unlearning algorithm $(\bar{A}, A)$, where $\bar{A}$ has the form $\bar{A}(U, A(S), T(S)) := A(S)$ (and thus, in particular, takes no side information) with deletion capacity
        \[
            m_{\eps, \delta}^{A, \bar{A}}(\alpha) \geq \Omega\mleft( \frac{\eps n \alpha}{\sqrt{d \log{(1/\delta)}}} \mright)
        \]
    where the constant in the $\Omega(\cdot)$ only depends on the properties of the loss function.
\end{theorem}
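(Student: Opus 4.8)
The plan is to instantiate the ``do-nothing'' unlearning rule $\bar A(U,w,T):=w$ --- which manifestly takes no side information and is lazy --- and to show that the burden of unlearning up to $m$ points can be offloaded entirely onto a sufficiently private learning algorithm $A$ via a group-privacy argument. Two observations drive this. First, since $\bar A$ discards $U$, the output $\bar A(U,A(S),T(S))$ equals $A(S)$ regardless of $U$, so $F(\bar A(U,A(S),T(S)))-F^\ast = F(A(S))-F^\ast$ carries no dependence on $U$; the $\max_{U\colon|U|\le m}$ in Definition~\ref{eq:del:capacity} is vacuous and the utility side of the deletion-capacity constraint is simply ``$A$ has excess population risk $\le\alpha$.'' Second, with $\bar A$ and $T$ as above, the two $(\eps,\delta)$-unlearning inequalities of Definition~\ref{def:unlearning} collapse to requiring that $A(S)$ and $A(S\setminus U)$ be $(\eps,\delta)$-indistinguishable for all $|U|\le m$, i.e.\ that $A$ be $(\eps,\delta)$-DP on inputs at distance $m$. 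So it suffices to pick a private learner whose per-point privacy, amplified by group privacy to distance $m$, still meets the target $(\eps,\delta)$ while retaining excess risk $\le\alpha$.

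Concretely I would take $A$ to be a near-optimal differentially private stochastic convex optimization algorithm (e.g.\ noisy regularised SGD, or output perturbation) whose guarantee is stated as $\rho$-zCDP; for a fixed Lipschitz convex loss over a bounded $\cW\subseteq\R^d$, such an algorithm attains excess population risk $O(1/\sqrt n + \sqrt d/(n\sqrt\rho))$. Choosing $\rho=\Theta(d/(n^2\alpha^2))$ --- the smallest value compatible with risk $\le\alpha$, assuming $n=\Omega(1/\alpha^2)$ as is necessary even without privacy --- makes the utility side hold. For the privacy side, apply Proposition~\ref{proposition:zcdp-group-privacy}: $A$ is $(m^2\rho)$-zCDP on datasets differing in $m$ entries; then the standard zCDP-to-approximate-DP conversion yields $(m^2\rho + 2\sqrt{m^2\rho\log(1/\delta)},\,\delta)$-DP. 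Imposing this to be at most $(\eps,\delta)$ and solving for the largest admissible $m$ gives $m=\Omega\bigl(\eps/\sqrt{\rho\log(1/\delta)}\bigr)$; substituting $\rho=\Theta(d/(n^2\alpha^2))$ gives $m=\Omega\bigl(\eps n\alpha/\sqrt{d\log(1/\delta)}\bigr)$. Laziness, the ``no side information'' property, and the symmetry of the two unlearning inequalities all hold by construction. The strongly convex case is identical with the sharper DP-SCO rate $O(1/n + d/(n^2\rho))$, which forces only $\rho=\Theta(d/(n^2\alpha))$ and hence $m=\Omega(\eps n\sqrt\alpha/\sqrt{d\log(1/\delta)})$, and the $\delta=0$ case uses pure-DP group privacy with the $\eps$-DP SCO rate instead.

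I expect the main obstacle to be obtaining the correct joint dependence on $\eps$ and $\log(1/\delta)$ through the group-privacy blow-up --- precisely where the earlier analysis lost polylogarithmic factors. The point is that routing through zCDP is what keeps the overhead tight: zCDP amplifies under $m$-point grouping as the clean $m^2\rho$, and converts back to $(\eps,\delta)$-DP paying only a single $\sqrt{\log(1/\delta)}$, whereas applying advanced composition or approximate-DP group privacy directly to an $(\eps_0,\delta_0)$-DP learner both inflates $\delta$ multiplicatively and loses additional logarithmic factors. I would therefore commit to the zCDP formulation of the DP-SCO guarantee from the outset, and separately check the boundary regimes (very large $\eps$, and $n$ close to $1/\alpha^2$) so that the single clean expression in the statement is the operative bound.
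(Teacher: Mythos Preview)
Your proposal is correct and matches the paper's proof essentially line-for-line: the paper likewise takes $\bar A$ to be the identity, invokes the zCDP-based DP-SCO guarantee (Lemma~\ref{lemma:zcdp-sco-bound}), applies zCDP group privacy (Proposition~\ref{proposition:zcdp-group-privacy}) to get $m^2\rho$-zCDP at distance $m$, converts to $(\eps,\delta)$-DP, and solves for $m$. The only cosmetic difference is the order of the algebra---the paper fixes $\rho$ in terms of $(\eps,m,\delta)$ and then reads off $m$ from the risk bound, whereas you fix $\rho$ from the risk constraint and then solve the privacy constraint for $m$---but the resulting bound and the key insight (routing through zCDP to avoid the logarithmic losses of approximate-DP group privacy) are identical.
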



\begin{lemma}[zCDP mini-batch noisy SGD {\cite{vitaly_private_sco_linear_2020}}]
    \label{lemma:zcdp-sco-bound}
    Fix any $L$-Lipschitz convex loss function over a convex subset $\mathcal{B}$ of $\mathbb{R}^d$ of diameter $D$. Then there exists an algorithm $A$ which satisfies $(\rho^2/2)$-zCDP with excess population loss:
    \begin{equation*}
        \expect{F(\theta) - \min_{\theta \in \mathcal{B}} F(\theta)} \leq O\mleft(DL \cdot \mleft( \frac{1}{\sqrt{n}} + \frac{\sqrt{d}}{\rho n} \mright) \mright)
    \end{equation*}
    where the expectation is taken over the randomness of $A$.
\end{lemma}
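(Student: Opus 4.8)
The plan is to prove the lemma by instantiating the noisy mini-batch SGD algorithm of Feldman, Koren, and Talwar and carrying out its privacy and utility analyses in the zCDP language used elsewhere in the paper. Note that the $(\rho^2/2)$-zCDP parameterization is chosen precisely so that, after applying the standard conversion from $\rho$-zCDP to $(\eps,\delta)$-DP, the excess-risk bound takes the form $O(DL(1/\sqrt n + \sqrt{d\log(1/\delta)}/(\eps n)))$ familiar from DP-SCO and matching the expression in Theorem~\ref{theo:unlearning:convex:ub}.

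First I would describe the algorithm: partition the $n$ i.i.d.\ samples into disjoint batches of size $b$ and perform a single pass of projected SGD over $\mathcal{B}$, where at step $t$ one forms the batch-averaged gradient $\widehat g_t$, perturbs it by independent noise $\xi_t \sim \mathcal{N}(0, \sigma^2 I_d)$, takes a projected step of size $\eta$, and finally outputs the average iterate. To obtain the sharp $\sqrt d/(\rho n)$ term (rather than the naive $\sqrt d/(\rho \sqrt n)$), I would run this in $\Theta(\log n)$ phases on disjoint fresh sub-samples, where the feasible set in each phase is a ball around the previous phase's output whose radius, together with the step size, decreases geometrically.

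For privacy: since each sample is used in exactly one gradient evaluation, changing a single sample alters only one released noisy gradient, so by adaptive composition of R\'enyi divergences the zCDP parameter of the whole transcript equals that of one Gaussian step (no $T$-fold composition loss), and likewise there is no loss across the $\log n$ phases because they use disjoint data; calibrating $\sigma = \Theta(L/\rho)$ — using that $\widehat g_t$ has $\ell_2$-sensitivity $O(L/b)$ — then yields overall $(\rho^2/2)$-zCDP. For utility: because each batch is a fresh independent draw, $\widehat g_t$ is an unbiased estimate of $\nabla F$, so the perturbed iteration is genuine SGD on the population objective $F$ with stochastic gradients of second moment $O(L^2 + \sigma^2 d)$; the textbook convex-SGD guarantee $O(D\sqrt{(L^2 + \sigma^2 d)/n})$ for the averaged iterate, applied phase by phase and telescoped (each phase's optimization error bounds the radius, hence the noise, of the next), collapses to $O(DL(1/\sqrt n + \sqrt d/(\rho n)))$.

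The main obstacle is the utility analysis of the phased scheme: improving the naive $DL\sqrt d/(\rho\sqrt n)$ rate to $DL\sqrt d/(\rho n)$ is not a single clean computation but requires coupling the feasible-ball radius of each phase to the (random) excess risk of the preceding phase and summing the resulting geometric series, while checking that the disjoint-data privacy accounting still goes through for the phased schedule. Everything else — the per-step Gaussian mechanism, unbiasedness of the mini-batch gradient, and the standard SGD regret bound — is routine.
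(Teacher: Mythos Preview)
The paper does not prove this lemma at all: it is stated as a direct citation of \cite{vitaly_private_sco_linear_2020} and used as a black box in the proof of Theorem~\ref{theo:unlearning:convex:ub}. So there is no ``paper's own proof'' to compare against; your proposal is instead a sketch of the argument in the cited reference.

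As such a sketch, your outline is faithful to the Feldman--Koren--Talwar approach: single-pass noisy SGD on disjoint mini-batches (so that only one noisy gradient is affected by a neighboring change, avoiding any $T$-fold composition blowup in the zCDP parameter), combined with a phased localization/iterative shrinking of the feasible ball to improve the dependence from $\sqrt d/(\rho\sqrt n)$ to $\sqrt d/(\rho n)$. One small slip: with batch size $b$ and per-coordinate sensitivity $O(L/b)$, the noise scale needed for $(\rho^2/2)$-zCDP per step is $\sigma = \Theta(L/(b\rho))$, not $\Theta(L/\rho)$; this is what makes the variance term in the SGD regret bound interact correctly with the phase schedule. That aside, the plan is sound, and the ``main obstacle'' you flag (coupling each phase's radius to the previous phase's excess risk and summing the geometric series) is indeed the substantive part of the argument in \cite{vitaly_private_sco_linear_2020}.
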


\begin{proof}[Proof of \cref{theo:unlearning:convex:ub}]
    The proof follows the same setting as in \cite{acharya_remember_unlearning_2021}. The main change is that we apply group privacy bounds in terms of zCDP instead of the standard DP guarantee provided by \cite[Theorem 3.2]{bassily_sco_2019}.

    We first establish a tighter bound for algorithm that achieves $m$-entries group privacy via \cref{lemma:zcdp-sco-bound}. \cite{vitaly_private_sco_linear_2020} provides a zCDP version of \cite[Theorem 3.2]{bassily_sco_2019} with $\rho^2/2$-zCDP, hence by group privacy, we yield $\frac{m^2\rho^2}{2}$-zCDP by \cref{proposition:zcdp-group-privacy} for neighboring datasets differing in $m$ entries. Then, translating $\frac{m^2\rho^2}{2}$-zCDP to $(\eps, \delta)$-DP yields $\eps = O\mleft({m\rho \sqrt{\log{(1/\delta)}}}\mright)$. 

    By the above discussion, using this zCDP-private learning algorithm with $\rho=\Theta\mleft(\frac{\eps}{m \sqrt{\ln{(1/\delta)}}}\mright)$, we get an excess population loss bounded by
        \begin{equation}
            \label{eq:guarantee:learning}
            O\mleft( DL\mleft( \frac{1}{\sqrt{n}} + \frac{m\sqrt{d \ln{(1/\delta)}}}{\eps n} \mright)\mright)
        \end{equation}

    It only remains to show how the claimed deletion capacity bound follows from  this excess population risk guarantee. Construct, as discussed earlier, an unlearning algorithm $\bar{A}$ that returns the input without making any changes (and in particular does not require any additional statistics $T(S)$, and satisfies the laziness assumption).
            Since $A$ is $(\eps, \delta)$-DP, for any set $U \subseteq S, |U| = m$, and $W\subseteq \cW$,
            \begin{align*}
                \probaOf{A(S) \in W} &\leq e^\eps \probaOf{A(S') \in W} + \delta\\
                \probaOf{A(S') \in W} &\leq e^\eps \probaOf{A(S) \in W} + \delta
            \end{align*}.
    But since $\bar{A}(U, A(S)) = A(S)$, this readily yields, letting $S' := S \setminus U$:  
    \begin{align*}
        \probaOf{\bar{A}(U, A(S)) \in W} &\leq e^\eps \probaOf{\bar{A}(\emptyset, A(S')) \in W} + \delta\\
        \probaOf{\bar{A}(\emptyset, A(S')) \in W} &\leq e^\eps \probaOf{\bar{A}(U, A(S))\in W} + \delta
    \end{align*}
    which implies that $(A, \bar{A})$ is indeed $(\eps, \delta)$-unlearning for $U$ of size (up to) $m$.

    Recalling the definition of deletion capacity, we finally deduce from~\eqref{eq:guarantee:learning} the deletion capacity with excess population risk less than $\alpha$:
    \[
        m^{A, \bar{A}}_{\eps, \delta}(\alpha) \geq m = \Omega\mleft( \frac{\eps n \alpha}{\sqrt{d \ln{(1/\delta)}}} \mright)
    \]
    where the $O(\cdot)$ hides constant factors depending only on the loss function (namely, the Lipschitz function $L$, and the diameter $D$).
\end{proof}

\section{Proof of Theorem 3.2 (Upper Bound)}
\label{appendix:thm:unlearning:ub:proof}

\begin{theorem}[Deletion capacity from unlearning via DP, Upper Bound (Theorem 3.3 in Submission)]
    There exists a Lipschitz convex loss function (indeed, \emph{linear}) for which any $(\eps,\delta)$-unlearning algorithm $(\bar{A}, A)$ which takes no side information must have deletion capacity
    \[
        m^{A, \bar{A}}_{\eps, \delta}(\alpha) \leq O\mleft( \frac{\eps n \alpha}{\sqrt{d \log{(1/\delta)}}} \mright)\,.
    \]
\end{theorem}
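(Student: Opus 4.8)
The plan is to reduce to the known tight lower bound for $(\eps,\delta)$-differentially private stochastic convex optimization with \emph{linear} losses: on a suitable $1$-Lipschitz linear instance $f(w,z)=\langle w,z\rangle$ over the unit $\ell_2$ ball $\cW\subseteq\R^d$, any $(\eps,\delta)$-DP algorithm given $n'$ i.i.d.\ samples must incur excess population risk $\Omega\left(\min\{1,\sqrt{d\log(1/\delta)}/(\eps n')\}\right)$, the $\sqrt{\log(1/\delta)}$ factor being produced by the fingerprinting(-code) machinery. I would take as hard instance exactly such a linear loss, equipped with the standard ``random one-way marginals'' distribution $\mathcal D$ (each coordinate of $z$ an independent $\pm$-valued bit with a hidden, randomly chosen bias, rescaled so $\|z\|\le 1$), calibrating the scale of the hidden signal so that the SCO problem is ``just barely hard'' at accuracy $\alpha^\star := m\sqrt{d\log(1/\delta)}/(\eps n)$; for a linear loss over the ball, excess population risk $\le\alpha$ is equivalent, up to constants, to recovering the mean $\mu=\mathbb{E}_{z\sim\mathcal D}[z]$ (equivalently, the hidden biases) within a controlled error, which is precisely the quantity a fingerprinting/tracing argument controls.

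The heart of the argument is to show that an $(\eps,\delta)$-unlearning algorithm $(\bar A,A)$ with no side information and deletion capacity $\ge m$ behaves, from the point of view of the tracing attack, like an algorithm that pays $(\eps,\delta)$ per \emph{block} of $m$ samples rather than per sample. The key observation is that for any block $U\subseteq S$ of size $m$ the unlearning guarantee yields $\bar A(U,A(S))\approx_{\eps,\delta}\bar A(\emptyset,A(S\setminus U))$, whose right-hand side is a function of $S\setminus U$ alone, hence \emph{independent of the data in $U$}: thus ``resampling the whole block $U$ at once moves the output of the map $S\mapsto\bar A(U,A(S))$ by only $(\eps,\delta)$'', while by deletion capacity this same output is $\alpha$-accurate. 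Concretely I would draw $S\sim\mathcal D^n$, partition $[n]$ into $k=\lfloor n/m\rfloor$ blocks $B_1,\dots,B_k$ of size $m$, set $W_\ell:=\bar A(\{z_i:i\in B_\ell\},A(S))$, and compare the ``block scores'' $\mathbb{E}\big[\langle\sum_{i\in B_\ell}(z_i-\mu),\,W_\ell\rangle\big]$ against two bounds. On one side, using that $W_\ell$ is $(\eps,\delta)$-close to something independent of $\{z_i:i\in B_\ell\}$ and that $\|\sum_{i\in B_\ell}(z_i-\mu)\|$ has sub-Gaussian tails around $\sqrt m$, each block score is at most $O\big((e^\eps-1)\sqrt{m\log(1/\delta)}\big)$, where the truncation needed to absorb the $\delta$-term is what introduces the $\log(1/\delta)$. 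On the other side, the fingerprinting lemma (after symmetrizing the algorithm within each block, so the per-sample ``derivative identity'' can be resummed block-by-block) forces $\sum_\ell \mathbb{E}\big[\langle\sum_{i\in B_\ell}(z_i-\mu),W_\ell\rangle\big]=\Omega(d)$ whenever all the $W_\ell$ are $\alpha$-accurate with $\alpha$ below the calibrated threshold. Balancing over the $k=n/m$ blocks forces $\alpha=\Omega(m\sqrt{d\log(1/\delta)}/(\eps n))$, i.e., the claimed $m\le O(\eps n\alpha/\sqrt{d\log(1/\delta)})$.

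The main obstacle is exactly this block-level accounting. The unlearning guarantee only relates ``delete $U$, keep the $S$-trained model'' to ``retrain from scratch on $S\setminus U$'', and gives \emph{no} relation between $\bar A(U,\cdot)$ and $\bar A(U',\cdot)$ for two different deletion sets; hence the accurate object whose fingerprints one traces must be taken to be $\bar A(\{z_i:i\in B_\ell\},A(S))$, a \emph{different} mechanism for each block, and one must check that the fingerprinting lower bound survives this (it does, because each $W_\ell$ is individually accurate and, once the algorithm is symmetrized, the fingerprinting ``potential'' decomposes over blocks), and one must calibrate the signal scale of $\mathcal D$ so that the resulting threshold is exactly $m\sqrt{d\log(1/\delta)}/(\eps n)$ rather than an $m$-free quantity. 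The rest is routine bookkeeping: passing from the expectation-over-$S$, worst-case-over-$U$ form of Definition~\ref{eq:del:capacity} to ``with constant probability over $S$, every $U$ with $|U|\le m$ is handled with excess risk $O(\alpha)$'' via Markov (so we may work on a fixed good $S$); using that the loss is $O(1)$ on the instance to turn $(\eps,\delta)$-closeness into the quantitative block-score bound; and translating ``accurate for $F$'' into ``recovers $\mu$'', which is immediate for a linear loss over the ball.
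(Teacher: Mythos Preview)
Your proposal correctly isolates the key structural fact that unlearning delivers---namely, that $W_\ell:=\bar{A}(U_\ell,A(S))$ is both $\alpha$-accurate and $(\eps,\delta)$-close to a function of $S\setminus U_\ell$ alone---and your upper bound on each ``block score'' $\mathbb{E}[\langle Z_\ell,W_\ell\rangle]\le O\bigl(\eps\sqrt{m\log(1/\delta)}\bigr)$ is sound. The gap is in the \emph{lower} bound: the fingerprinting lemma does \emph{not} force $\sum_\ell\mathbb{E}[\langle Z_\ell,W_\ell\rangle]=\Omega(d)$ from accuracy of each $W_\ell$. The fingerprinting inequality lower-bounds $\sum_{i=1}^n\mathbb{E}[\langle z_i-\mu,W\rangle]$ for a \emph{single} accurate estimator $W$, summed over \emph{all} samples; it does not decompose block-by-block when the estimator varies with the block. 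Concretely, if each $W_\ell$ depended only on $S\setminus U_\ell$ (which is exactly what perfect $(0,0)$-unlearning would deliver), every block score would be identically $0$ while each $W_\ell$ remains accurate---so accuracy alone cannot supply the $\Omega(d)$ you claim, and ``symmetrizing within blocks'' does not address this, since the issue is that $W_\ell$'s correlation with the data may live entirely \emph{outside} block $\ell$. Your own parenthetical (``it does, because each $W_\ell$ is individually accurate and \ldots the fingerprinting `potential' decomposes over blocks'') is precisely the step that fails.

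The paper sidesteps this difficulty by a much more elementary and modular route. It first passes from population to empirical loss (so one may work with a worst-case dataset rather than an i.i.d.\ one), and then uses a \emph{replication reduction}: given any mechanism that is $(\eps,\delta)$-private at edit distance $m$ on datasets of size $n$, build a standard $(\eps,\delta)$-DP mechanism on datasets of size $N=n/m$ by replicating each of the $N$ points $m$ times and running the original mechanism. Linearity of the one-way-marginal query $q(S)=\tfrac{1}{n}\sum_i x_i$ guarantees $q$ is preserved under replication, so the error transfers verbatim; now the Steinke--Ullman lower bound at sample size $N$ gives error $\Omega\bigl(\sqrt{d\log(1/\delta)}/(\eps N)\bigr)=\Omega\bigl(m\sqrt{d\log(1/\delta)}/(\eps n)\bigr)$, which rearranges to the claimed deletion-capacity upper bound. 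This never opens up the fingerprinting machinery and in particular never needs a ``per-block fingerprinting potential''. If you want to salvage your direct approach, what you actually need is a \emph{single} accurate mechanism that is simultaneously $(\eps,\delta)$-insensitive to every block; the replication trick is precisely how the paper manufactures the setting in which one exists.
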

\begin{proof}[Proof of \cref{theo:unlearning:convex:lb}]
    We will consider the following linear (and therefore convex and Lipschitz) loss function $\mathcal{L}(\theta, S)$:
             \begin{equation}
                \mathcal{L}(\theta, S) := -\langle \theta, \sum^n_{i=1}x_i \rangle
            \end{equation}
            for dataset $S$ of $n$ points $x_1,\dots, x_n \in\{-\frac{1}{\sqrt{d}}, \frac{1}{\sqrt{d}}\}^d$.      
            We also define the 1-way marginal query, i.e. average, as:
            \begin{equation}
                \label{eq:q}
                    q(S) := \frac{1}{n}\sum^n_{i=1}x_i\,.
            \end{equation}
    To establish our deletion capacity lower bound with respect to this loss function, we will proceed in three stages: the first, relatively standard, is to relate population loss (what we are interested in) to \emph{empirical} loss~--~which allows us to focus on the existence of a ``hard dataset.'' The second step is then to establish a sample complexity lower bound on the empirical risk (for this loss function) of any $(\eps,\delta)$-DP algorithm, via a reduction to differentially private computing of 1-marginals. This step is similar to the one underlying the (weaker) lower bound of~\cite{acharya_remember_unlearning_2021} (itself relying on an argument of~\cite{bassily_sco_2019}), although a more careful choice of building blocks for the reduction already allows us to obtain an improvement by logarithmic factors.

    Third, lift this DP lower bound to a stronger lower bound for DP with respect to edit distance $m$. This step is quite novel, as it morally corresponds to establishing the converse of the grouposition property of differential privacy (for our specific setting), a converse which does \emph{not} hold in general. Our argument, relatively simple, will crucially rely on the linearity of our loss function.

    We omit the details of the first step (reduction from population to empirical loss) in this detailed outline, as it is quite standard. For the second step, our starting point is the following lower bound of Steinke and Ullman:
    \begin{theorem}[Lower bound for one-way marginals {\cite[Main Theorem]{steinke_between_2016}}] \label{theorem:main_theorem_bst14}
                For every $\eps \in (0, 1)$, every function $\delta = \delta(n)$ such that $\delta \geq 2^{-o(n)}$ and $\delta \leq 1/n^{1+\Omega({1}})$, and for every $\alpha \leq 1/10$, if $A: \{\pm 1\}^{n \times d} \to [\pm 1]^d$ is $(\eps, \delta)$-differentially private and $\expect{\|\mathcal{A}(S) - q(S)\|_1} \leq \alpha d$ (i.e., with average-case accuracy $\alpha$) for all $S \in \{\pm 1\}^{n\times d}$, then we must have
                \[
                    n \geq \Omega\mleft({\frac{\sqrt{d \ln{(1/\delta)}}}{\eps \alpha}}\mright).
                \]
    \end{theorem} 
    
    Using this lower bound as a blackbox, we then can adapt the argument of~\cite[Lemma~5.1, Part 2]{bassily_erm_2014} to obtain the following stronger result:
    \begin{lemma}[Lower bound for Privately Computing 1-way Marginals] \label{lemma5.1-part2}
        Let $n, d \in \mathbb{N}, \eps > 0, 2^{-o{n}} \leq \delta(n) \leq 1/n^{1 + \Omega(1)}$. For all $\alpha \leq 1/10$, if $\mathcal{A}$ is $(\eps, \delta)$-differentially private. Then, for $S \subseteq \{\pm \frac{1}{\sqrt{d}}\}^{n \times d}$, one must have
        \[
            \expect{\|\mathcal{A}(S) - q(S)\|_2} = \min\mleft(\alpha, \Omega\mleft(\frac{\sqrt{d\ln{(1/\delta)}}}{n\eps}\mright)\mright)\,,
        \]
        where $q(S) = \frac{1}{n}\sum^n_{i=1}x_i$ as before. Moreover, this still holds under the assumption that $\|q(S)\|_2 \in [\frac{M-1}{n},\frac{M+1}{n}]$, where $M = \Omega(\min(n\alpha, \frac{\sqrt{d \ln{(1/\delta)}}}{\eps }))$.
    \end{lemma}
    \begin{proof}[Proof of \cref{lemma5.1-part2}]
        Our proof follows the same outline as in \cite{bassily_erm_2014}, but using the result of~\cref{theorem:main_theorem_bst14} as a black-box instead of the packing argument of~\cite{bassily_erm_2014}. Before doing so, we have to translate the result from Theorem \ref{theorem:main_theorem_bst14} into our setting, and handle the slightly different choice of parameterization ($\{\pm 1\}^d$ instead of $\{\pm 1/\sqrt{d}\}^d$).

        Let $n_\alpha \coloneqq C\cdot \frac{\sqrt{d \ln{(1/\delta)}}}{\eps \alpha}$, where $C>0$ is (strictly smaller than) the constant hidden in the $\Omega(\cdot)$ of~\cref{theorem:main_theorem_bst14}. By contradiction, suppose that, for some $n \leq n_\alpha$, we have an $(\eps, \delta)$-differentially private algorithm $\mathcal{A}$ that takes in a dataset $S \subseteq \{\pm \frac{1}{\sqrt{d}}\}^{n \times d}$ and outputs an estimate $\mathcal{A}(S)$ of $q(S)$ with expected $L_2$ error $\alpha$. Rescaling, we get that the algorithm $\mathcal{A}'$ which, on input $S'\subseteq \{\pm 1\}^{n \times d}$, computes $S \coloneqq S'/\sqrt{d}\subseteq \{\pm \frac{1}{\sqrt{d}}\}^{n \times d}$ and outputs $\sqrt{d}\cdot \mathcal{A}(S)$ is (1)~$(\eps, \delta)$-DP by post-processing, and (2)~since $q$ is linear, has error related to that of $\mathcal{A}$ by
        \begin{equation}
            \label{eq:bound:l2:error:Aprime}
            \expect{\|\mathcal{A}'(S') - q(S')\|_2}
            = \sqrt{d}\cdot \expect{\|\mathcal{A}(S) - q(S)\|_2}
            \leq \sqrt{d} \cdot  \alpha
        \end{equation}
        However, by~\cref{theorem:main_theorem_bst14}, $\mathcal{A}'$ must have expected $L_1$ error at least $\alpha d$ since $n \leq n_\alpha$. By Cauchy--Schwarz,
        \[
            \alpha d < \expect{\|\mathcal{A}'(S') - q(S')\|_1}
            \stackrel{\rm{}CS}{\leq} \sqrt{d}\cdot \expect{\|\mathcal{A}'(S') - q(S')\|_2}
            \stackrel{\eqref{eq:bound:l2:error:Aprime}}{\leq} \sqrt{d} \cdot (\alpha\sqrt{d}) = \alpha d
        \]
        leading to a contradiction. So for $n \leq n_\alpha$, any $(\eps, \delta)$-DP algorithm to estimate $q$ must have expected $L_2$ error at least $\alpha$, i.e., $\expect{\|\mathcal{A}(S) - q(S)\|_2} \geq \alpha$. Further, one can see by inspection of the proof of~\cref{theorem:main_theorem_bst14} that $\|q(S)\|_2$ satisfies the assumption in the "Moreover."\smallskip

        Now, for $n \geq n_\alpha$ (assume, for simplicity and without loss of generality, that $n-n_\alpha$ is even), we use a padding argument to establish the other part of the bound.                 Let $\mathcal{A}$ be any $(\eps, \delta)$-differentially private algorithm for answering $q$ on datasets of size $n$. Suppose for the sake of contradiction, that $\mathcal{A}$ satisfies
        \begin{equation}
            \label{eq:assumption:for:contradiction:padding}
            \expect{\|\mathcal{A}(S) - q(S)\|_2}
            < \frac{n_\alpha}{n}\cdot \alpha
        \end{equation}
        for every dataset $S$ of size $n$.

        Fix an arbitrary point $\mathbf{c} \in \{\pm 1 / \sqrt{d}\}^d$. Given any dataset $S=(x^{(1)},\dots,x^{(n_\alpha)})\in \mleft\{\pm 1\mright\}^{d\times n_\alpha}$ of size $n_\alpha$, we construct $\hat{S}$ of size $n$ as follows. Its first $n_\alpha$ entries are $x^{(1)},\dots,x^{(n_\alpha)}$; then for the remaining $n - n_\alpha$, we have (1) the first $\lceil \frac{n - n_\alpha}{2} \rceil$ (i.e. the first half) of those entries are all copies of $\mathbf{c}$, and (2) the remaining $\lfloor \frac{n - n_\alpha}{2} \rfloor$ are copies of $-\mathbf{c}$.

        Note that we have
        \[
            q(\hat{S}) =
            \frac{n_\alpha}{n}q(S)
        \]
        for every $S$, and in particular  $\|q(\hat{S})\|_2$ satisfies the assumption in the "Moreover."

        Now, we define an algorithm $\hat{\mathcal{A}}$ for approximating $q$ on datasets of size $n_\alpha$ as follows. On input $S\in \mleft\{\pm 1\mright\}^{d\times n_\alpha}$, $\hat{\mathcal{A}}$:
        \begin{enumerate}
            \item Computes $\hat{S}\in \mleft\{\pm 1\mright\}^{d\times n}$ as above
            \item Outputs $\frac{n}{n_\alpha}\mathcal{A}(\hat{S})$
        \end{enumerate}

        Since $\mathcal{A}$ is already differentially private, $\hat{\mathcal{A}}$ is also $(\eps, \delta)$-DP due to the post-processing property of differential privacy. Moreover,
        \[
            \expect{\|\hat{\mathcal{A}}(S) - q(S)\|_2}
            = \expect{\mleft\|\frac{n}{n_\alpha}\mathcal{A}(\hat{S}) - \frac{n}{n_\alpha}q(\hat{S})\mright\|_2}
            = \frac{n}{n_\alpha}\expect{\mleft\|\mathcal{A}(\hat{S}) - q(\hat{S})\mright\|_2}
            \stackrel{\eqref{eq:assumption:for:contradiction:padding}}{<} \frac{n}{n_\alpha}\cdot \frac{n_\alpha}{n}\alpha = \alpha
        \]
        and so $\hat{\mathcal{A}}$ achieves expected error strictly smaller than $\alpha$ on datasets of size $n_\alpha$; which contradicts the first part of the lower bound we already established. So for $n > n_\alpha$, any $(\eps, \delta)$-DP algorithm to estimate $q$ must have expected $L_2$ error at least $\frac{n_\alpha}{n}\cdot \alpha = C\cdot \frac{\sqrt{d\ln(1/\delta)}}{n\eps}$.\smallskip

        Finally, we we have shown that for every $n$ and every $\eps > 0$, there is a constant $C>0$ such that every $(\eps, \delta)$-differentially private algorithm $\mathcal{A}$ answering the linear query $q$ must have, on some dataset $S$ of size $n$, expected $L_2$ error at least
        \[
            \expect{\|\mathcal{A}(S) - q(S)\|_2} = \min\mleft( \alpha,C\cdot \frac{\sqrt{d\ln(1/\delta)}}{n\eps} \mright).
        \]
        proving the lemma.
    \end{proof}

    Combining the above with the argument strategy of~\cite[Theorem 5.3]{bassily_erm_2014} finally yields the main lemma for the second step of our proof for \cref{theo:unlearning:convex:ub}:

    \begin{lemma}[Lower bound on empirical loss of $(\eps, \delta)$-DP algorithms]
        Let $n, d \in \mathbb{N}, \eps > 0$, and $\delta = o(1/n)$. For every $(\eps, \delta)$-differentially private algorithm with output $\theta^{priv}$, there is a dataset $S = \{x_1, \ldots, x_n\} \subseteq \{-\frac{1}{\sqrt{d}}, \frac{1}{\sqrt{d}}\}^d$ such that
        \[
            \expect{\mathcal{L}(\theta^{priv}, S) - \mathcal{L}(\theta^*, S)} = \min\mleft( \alpha^2, \Omega\mleft(\frac{d\log(1/\delta)}{n^2\eps^2}\mright) \mright)
        \]
        where $\theta^* := \frac{\sum^n_{i=1}x_i}{\| \sum^n_{i=1}x_i \|_2}$ is the minimizer of $\mathcal{L}(\theta, S) := - \langle \theta, \frac{1}{n}\sum^n_{i=1}x_i \rangle$ (which is linear and, as such, Lipschitz and convex).
        \label{theorem:erm-dp} 
    \end{lemma}
    \begin{proof}[Proof of \cref{theorem:erm-dp}]
        This proof follows the same structure as that of \cite[Theorem~5.3]{bassily_erm_2014} but adapt the bound in terms of expectation.
        
        First, observe that for any $\theta \in \mathbb{B}$ and dataset $S$ we have:
        \[
        \mathcal{L}(\theta, S) - \mathcal{L}(\theta^*, S)
            = \frac{1}{2}\|q(S)\|_2 \| \theta - \theta^* \|^2_2,
        \]
        since $\|\theta - \theta^* \|^2_2 = \|\theta^*\|^2_2 + \|\theta\|^2_2 - 2\langle \theta, \theta^* \rangle = 2(1 - \langle \theta, \theta^* \rangle)$ using the fact that $\theta^*, \theta \in \mathbb{B}$ have $\|\theta\|_2, \|\theta^\ast\|_2 = 1$.

        Suppose that there is an $(\eps, \delta)$-differentially private algorithm $\mathcal{A}$ that outputs $\theta^{priv}$ such that, for every dataset $S \subseteq \{-\frac{1}{\sqrt{d}}, \frac{1}{\sqrt{d}}\}^d$, we have:
        \[
            \expect{\mathcal{L}(\theta^{priv}, S) - \mathcal{L}(\theta^*, S)} \leq \Delta
        \]
        for a sufficiently small constant $C>0$, and some $\Delta\geq 0$. We will prove a lower bound on $\Delta$.  
        To do so, recall $q(S) = \theta^* \cdot \|q(S)\|_2$; and that the lower bound from~\cref{lemma5.1-part2} still holds when the dataset $S$ is promised to be such that $q(S) \in [(M\pm 1)/n]$, for $M=\Theta(\min(n\alpha,\sqrt{d\log(1/\delta)}/\eps)$. 

Consider the algorithm (private by post-processing) $\mathcal{A}$ which outputs $\mathcal{A}(S) = \frac{M}{n}\theta^{priv}$. Then, for any dataset $S$ such that $\|\sum_{i=1}^n x_i \|_2 \in [M-1,M+1]$,\[
\expect{\|\mathcal{A}(S) - q(S) \|_2 } \leq
\expect{\|\mathcal{A}(S) - q(S) \|_2^2 }^{1/2}
= \expect{\|\frac{M}{n}\theta^{priv} - q(S) \|_2^2 }^{1/2}\,.
\]
On the other hand,
        \begin{align*}
            \expect{\|\frac{M}{n}\theta^{priv} - q(S) \|_2^2 }
            &\leq 2\mleft( \expect{\|q(S)\|_2^2\|\theta^{priv} - \theta^*\|_2^2} + \expect{\|\frac{M}{n}\theta^{priv} - \|q(S)\|_2\theta^{priv} \|_2^2} \mright)\\
            &= 4\|q(S)\|_2\expect{\mathcal{L}(\theta^{priv}, S) - \mathcal{L}(\theta^*, S)} + 2\left(\frac{M}{n} - \|q(S)\|_2 \right)^2 \\
           &\leq \frac{4(M+1)}{n}\expect{\mathcal{L}(\theta^{priv}, S) - \mathcal{L}(\theta^*, S)} + \frac{2}{n^2}  \tag{as $n\|q(S)\|_2 \in [M-1,M+1]$}\\
           &\leq \frac{4(M+1)\Delta}{n} + \frac{2}{n^2}
        \end{align*}
        By \cref{lemma5.1-part2}, we know that $\expect{\|\mathcal{A}(S) - q(S)\|_2} = \min\mleft( \alpha,C\cdot \frac{\sqrt{d\ln(1/\delta)}}{n\eps} \mright)$, for some absolute constant $C>0$, in the worst case.
        Hence, we must have
        \[
            \frac{\Delta \cdot M}{n} \geq \min\mleft( \alpha^2,\frac{d\ln(1/\delta)}{n^2\eps^2} \mright);
        \]
        recalling the setting of $M$, we get $\expect{\mathcal{L}(\theta^{priv}, S) - \mathcal{L}(\theta^*, S)} = \min\mleft( \alpha, \Omega\mleft(\sqrt{\frac{d\ln(1/\delta)}{n\eps}}\mright) \mright)$.
    \end{proof}

    The above lemma establishes a lower bound on the empirical loss of any $(\eps, \delta)$-differentially private algorithm. To derive from this our claimed lower bound on unlearning algorithms, we need to introduce a dependence on $m$, the deletion capacity (i.e., number of points to unlearn). This is done in the last (third) step of our argument, via a reduction which establishes a (restricted) converse to the grouposition property of DP.

    Recall that an algorithm $M \colon \mathcal{X}^n \to \mathcal{Y}$ satisfies $(\eps, \delta)$-DP for edit distance $m$ if for every pair of neighboring datasets \emph{$X, X'$ that differ in up to $m$ entries}, and every $S \subseteq \mathcal{Y}$:
    \[
        \probaOf{M(X) \in S} \leq e^\eps \probaOf{M(X') \in S} + \delta.
    \]
    We apply this $m$-edit distance $(\eps, \delta)$-DP on \cref{theorem:erm-dp} by a reduction that shows: for any differentially private algorithm with respect to edit distance at most $m$ must incur an empirical loss given by \cref{theorem:erm-dp}.

    \begin{lemma}
        \label{lemma:reduction:lb:medit}
        Suppose there exists an $m$-edit distance $(\eps, \delta)$-DP algorithm $\mathcal{M}$ that takes in a dataset $S$ of size $n$ to approximate $q(S)$ (as defined in~\eqref{eq:q}), with empirical loss $\gamma$. Then, we can construct a $1$-edit distance (i.e., standard) $(\eps, \delta)$-DP algorithm $\mathcal{M}'$ that, on input a dataset $S'$ of $N = n/m$ data points, approximates $q(S')$ to error $\gamma$.
    \end{lemma}

    \begin{proof}[Proof of \cref{lemma:reduction:lb:medit}]
        The reduction is quite simple: given $\mathcal{M}$, construct $\mathcal{M}'$ as follows for $N = \frac{n}{m}$ inputs:
        \[
            \mathcal{M}'(x_1, \ldots, x_N) = \mathcal{M}(\underbrace{x_1, \ldots, x_1,}_m \underbrace{x_2, \ldots, x_2}_m, \ldots, \underbrace{x_N, \ldots, x_N}_m)\,.
        \]
        We immediately have that $\mathcal{M}'$ is $(\eps,\delta)$-DP for the usual $1$-edit distance between datasets, since $\mathcal{M}$ is DP with respect to edit distance $m$. The sample complexity and error bound then follows direction from $n = N \times m$, where $n \geq N, N \in \mathbb{N}, m \geq 1$, and the fact that $q(x_1,\dots, x_N) = q(x_1,\dots, x_1,x_2,\dots, x_2, \dots, x_N, \dots, x_N)$ due to the definition of $q$.
    \end{proof}  
    Combining Lemma~\ref{lemma:reduction:lb:medit} with \cref{theorem:erm-dp}, we get that any $m$-edit distance $(\eps, \delta)$-DP algorithm $\mathcal{M}$ approximating $q$ on datasets of size $n=N\times m$ must have error $\gamma$ at least
    \[
    \min\mleft( \alpha, \Omega\mleft(\frac{\sqrt{d\log(1/\delta)}}{N\eps}\mright) \mright)
    = \min\mleft( \alpha, \Omega\mleft(\frac{m\sqrt{d\log(1/\delta)}}{n\eps}\mright) \mright)
    \]
    which, reorganising the terms and recalling the definition of deletion capacity, yields the claimed bound on $m^{A, \bar{A}}_{\eps, \delta}$, and hence completes the proof for \cref{theo:unlearning:convex:lb}.
\end{proof}

The proof of Theorem 1.2 (the strongly convex case), restated below, is analogous to those of \cref{theo:unlearning:convex:ub,theo:unlearning:convex:lb}, but using~\cite[Theorem~5.1]{vitaly_private_sco_linear_2020} for the upper bound (instead of~\cref{lemma:zcdp-sco-bound}) and ~\cite[Theorem~5.2]{steinke_between_2016} for the lower bound (instead of~\cref{theorem:main_theorem_bst14}).

\begin{theorem}[Unlearning For Strongly Convex Loss Functions (Theorem 1.2, restated)]
    Let $f\colon\cW\times \cX\to\R$ be a $1$-Lipschitz \emph{strongly} convex loss function. There exists an $(\eps,\delta)$-machine unlearning algorithm which, trained on a dataset $S\subseteq \cX^n$, does not store \emph{any} side information about the training set besides the learned model, and can unlearn up to
    \[
    	m = O\!\mleft( \frac{n\eps\sqrt{\alpha}}{\sqrt{d\log(1/\delta)}} \mright)
    \]
    datapoints without incurring excess population risk greater than $\alpha$. Moreover, this is tight.
\end{theorem}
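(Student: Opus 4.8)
The plan is to mirror, mutatis mutandis, the two-sided argument already carried out for the convex case in Theorems~\ref{theo:unlearning:convex:ub} and~\ref{theo:unlearning:convex:lb}, swapping in the strongly convex analogues of the building blocks. For the upper bound on the deletion capacity, I would again take the lazy unlearning algorithm $\bar{A}(U, A(S), T(S)) := A(S)$, so that the $(\eps,\delta)$-unlearning guarantee follows for free from $A$ being $(\eps,\delta)$-DP, exactly as in the proof of~\cref{theo:unlearning:convex:ub}. The only change is the utility/privacy trade-off: instead of~\cref{lemma:zcdp-sco-bound}, I would invoke the strongly convex zCDP stochastic convex optimization bound of~\cite[Theorem~5.1]{vitaly_private_sco_linear_2020}, which gives excess population loss $O\bigl( \frac{1}{n} + \frac{d}{\rho^2 n^2} \bigr)$ (up to constants depending on the strong convexity and Lipschitz parameters) for a $(\rho^2/2)$-zCDP algorithm. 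Running the same group-privacy calculation (\cref{proposition:zcdp-group-privacy}) for datasets differing in $m$ entries gives $\tfrac{m^2\rho^2}{2}$-zCDP, hence $(\eps,\delta)$-DP with $\eps = O(m\rho\sqrt{\log(1/\delta)})$; setting $\rho = \Theta\bigl(\tfrac{\eps}{m\sqrt{\ln(1/\delta)}}\bigr)$ yields excess risk $O\bigl( \tfrac1n + \tfrac{m^2 d \ln(1/\delta)}{\eps^2 n^2} \bigr)$. Requiring this to be at most $\alpha$ and solving for $m$ gives $m = O\bigl( \tfrac{n\eps\sqrt{\alpha}}{\sqrt{d\log(1/\delta)}} \bigr)$, which is the claimed rate; the $\sqrt{\alpha}$ (rather than $\alpha$) is precisely the footprint of the faster $1/n^2$ decay of the strongly convex rate.

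For the matching lower bound I would reuse the three-stage structure of the proof of~\cref{theo:unlearning:convex:lb}. The first stage (reduction from population to empirical loss) is unchanged. For the second stage I would replace the one-way marginals lower bound of~\cref{theorem:main_theorem_bst14} with its strongly-convex-tailored counterpart~\cite[Theorem~5.2]{steinke_between_2016}, and push it through the same rescaling and padding arguments as in~\cref{lemma5.1-part2} to obtain the corresponding $L_2$-error lower bound for privately estimating $q(S)$; the hard-instance loss function should now be taken to be the (strongly convex) regularized analogue, e.g.\ $\mathcal{L}(\theta,S) := -\langle\theta,\sum_i x_i\rangle + \tfrac{\mu}{2}\|\theta\|_2^2$ rather than the purely linear one, so that the minimizer is $\theta^\ast = q(S)/\mu$ and the excess loss relates to $\|\theta-\theta^\ast\|_2^2$ with a clean $\tfrac{\mu}{2}$ factor, exactly the identity used inside the proof of~\cref{theorem:erm-dp}. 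This yields an empirical excess-risk lower bound of the form $\min\bigl(\alpha, \Omega(\tfrac{d\log(1/\delta)}{n^2\eps^2})\bigr)$ for $1$-edit-distance DP algorithms.

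The third stage is again the $m$-edit-distance amplification: I would apply verbatim the duplication reduction of~\cref{lemma:reduction:lb:medit}, which only uses that $q$ is invariant under $m$-fold repetition of each point (still true) and that the repeated-dataset construction preserves DP with respect to $1$-edit distance when the original is DP with respect to $m$-edit distance. Crucially, the converse-to-grouposition step still goes through because the regularizer $\tfrac{\mu}{2}\|\theta\|_2^2$ does not interact with the repetition — only the linear part $\sum_i x_i$ scales — so linearity of the data-dependent part of the loss is preserved, which is exactly what that reduction needs. Combining with the stage-two bound on datasets of size $N = n/m$ gives an empirical excess-risk lower bound of $\min\bigl(\alpha, \Omega(\tfrac{m^2 d\log(1/\delta)}{n^2\eps^2})\bigr)$ for any $m$-edit-distance $(\eps,\delta)$-DP algorithm, hence (since any $(\eps,\delta)$-unlearning algorithm with no side information and deletion capacity $m$ is $m$-edit-distance DP via the triangle inequality as in the convex case) for any such unlearning algorithm; setting this at least $\alpha$ and inverting gives $m = O\bigl(\tfrac{n\eps\sqrt{\alpha}}{\sqrt{d\log(1/\delta)}}\bigr)$, matching the upper bound.

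I expect the main obstacle to be the second stage: verifying that~\cite[Theorem~5.2]{steinke_between_2016} can be plugged in as a black box with the right parameterization (the $\{\pm 1/\sqrt d\}^d$ vs.\ $\{\pm 1\}^d$ rescaling, and the ``moreover'' promise on $\|q(S)\|_2$ needed to control the $\bigl(\tfrac{M}{n}-\|q(S)\|_2\bigr)^2$ slack term), and in particular confirming that the strong-convexity constant $\mu$ enters only as a benign multiplicative constant that can be absorbed into the $\Omega(\cdot)$ — as long as we insist, as in the informal statement, on $\mu$ being an absolute constant (e.g.\ $\mu = 1$). The upper-bound direction and the $m$-edit-distance reduction are essentially mechanical given the convex case.
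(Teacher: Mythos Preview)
Your proposal is correct and follows essentially the same approach as the paper, which simply states that the strongly convex case is analogous to the convex one with~\cite[Theorem~5.1]{vitaly_private_sco_linear_2020} substituted for~\cref{lemma:zcdp-sco-bound} in the upper bound and~\cite[Theorem~5.2]{steinke_between_2016} substituted for~\cref{theorem:main_theorem_bst14} in the lower bound. You have in fact fleshed out considerably more detail than the paper provides (including the regularized hard instance and the $\sqrt{\alpha}$ bookkeeping), all of which is consistent with the intended argument.
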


\section{Proof of Theorems~3.3 and 3.4}
    \begin{lemma}[Lower bound on empirical loss of $\eps$-DP algorithms]
        Let $n, d \in \mathbb{N}, \eps > 0$. For every $\eps$-differentially private algorithm with output $\theta^{priv}$, there is a dataset $S = \{x_1, \ldots, x_n\} \subseteq \{-\frac{1}{\sqrt{d}}, \frac{1}{\sqrt{d}}\}^d$ such that
        \[
            \expect{\mathcal{L}(\theta^{priv}, S) - \mathcal{L}(\theta^*, S)} = \min\mleft(1, \Omega\mleft(\frac{d}{n\eps}\mright) \mright)
        \]
        where $\theta^* := \frac{\sum^n_{i=1}x_i}{\| \sum^n_{i=1}x_i \|_2}$ is the minimizer of $\mathcal{L}(\theta, S) := - \langle \theta, \frac{1}{n}\sum^n_{i=1}x_i \rangle$ (which is linear and, as such, Lipschitz and convex).
        \label{lemmma:lb:erm:eps-dp}
    \end{lemma}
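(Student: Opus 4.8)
The plan is to re-run the proof of \cref{theorem:erm-dp} essentially verbatim, substituting for \cref{theorem:main_theorem_bst14} a \emph{pure}-DP lower bound for privately estimating $1$-way marginals (and, correspondingly, for \cref{lemma5.1-part2} its pure-DP analogue). Concretely, the first step is to establish: for every $\alpha\le 1/10$, any $\eps$-DP algorithm $\mathcal{A}\colon\{\pm1\}^{n\times d}\to[\pm1]^d$ with $\expect{\|\mathcal{A}(S)-q(S)\|_1}\le\alpha d$ for all $S$ must satisfy $n=\Omega(d/(\eps\alpha))$. This is a standard packing argument (and could alternatively be quoted from the private query-release literature, or extracted from \cite{bassily_erm_2014}): fix, via Gilbert--Varshamov, a binary code $C\subseteq\{\pm1\}^d$ with $|C|=2^{\Omega(d)}$ and minimum Hamming distance $\Omega(d)$, set $\beta=\Theta(\alpha)$, fix a point $\mathbf{c}$, and for each $v\in C$ let $S_v$ consist of $\beta n$ copies of $v$ together with a fixed, $v$-independent block of $(1-\beta)n$ points split evenly between $\mathbf{c}$ and $-\mathbf{c}$; then $q(S_v)=\beta v$, the $q(S_v)$ are pairwise $\Omega(\alpha d)$-separated in $\ell_1$, and $S_v,S_{v'}$ differ in at most $\beta n=O(\alpha n)$ entries. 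A packing argument over disjoint $\ell_1$-balls centered at the $q(S_v)$, combining a Markov bound with $\eps$-DP group privacy, forces $\eps\cdot O(\alpha n)=\Omega(\log|C|)=\Omega(d)$, hence $n=\Omega(d/(\eps\alpha))$; note that, unlike in the approximate-DP case, no constraint on $\delta$ is needed. Moreover, each $S_v$ has $\|q(S_v)\|_2=\beta\sqrt{d}$, so (as in \cref{lemma5.1-part2}) the bound persists under the norm promise we will need below.

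The second step is to transcribe the proof of \cref{lemma5.1-part2} with this bound as the black box. The same rescaling from $\{\pm1\}^d$ to $\{\pm1/\sqrt d\}^d$, followed by Cauchy--Schwarz, converts the $\ell_1$ bound into $\expect{\|\mathcal{A}(S)-q(S)\|_2}\ge\alpha$ for all $n\le n_\alpha:=C\,d/(\eps\alpha)$; the identical padding argument (appending copies of $\mathbf{c}$ and $-\mathbf{c}$) then upgrades this to $\expect{\|\mathcal{A}(S)-q(S)\|_2}=\min(\alpha,\,\Omega(d/(n\eps)))$ for every $n$, and this still holds under the promise $\|q(S)\|_2\in[(M-1)/n,(M+1)/n]$ with $M=\Theta(\min(n\alpha,\,d/\eps))$.

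The third step is exactly the reduction in the proof of \cref{theorem:erm-dp}, unchanged. Using the identity $\mathcal{L}(\theta,S)-\mathcal{L}(\theta^\ast,S)=\frac{1}{2}\|q(S)\|_2\,\|\theta-\theta^\ast\|_2^2$ (valid since $\|\theta\|_2=\|\theta^\ast\|_2=1$), any $\eps$-DP algorithm with expected excess empirical loss $\le\Delta$ yields, by the post-processing map $S\mapsto\frac{M}{n}\theta^{priv}$, an estimator of $q(S)$ with $\expect{\|\tfrac{M}{n}\theta^{priv}-q(S)\|_2^2}\le\frac{4(M+1)\Delta}{n}+\frac{2}{n^2}$ on datasets meeting the norm promise. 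Comparing against the $\min(\alpha,\Omega(d/(n\eps)))$ lower bound from step two and substituting $M=\Theta(\min(n\alpha,d/\eps))$ gives, in either regime of the $\min$, that $\Delta=\Omega(\min(\alpha,d/(n\eps)))$. Since $\alpha\le 1/10$ may be fixed to an absolute constant and the excess loss of this linear objective is trivially $O(1)$ (as $\|q(S)\|_2\le 1$), this yields $\expect{\mathcal{L}(\theta^{priv},S)-\mathcal{L}(\theta^\ast,S)}=\min(1,\,\Omega(d/(n\eps)))$, as claimed.

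I expect the only genuinely new work to be the first step: pinning down the pure-DP $1$-way-marginals lower bound with constants compatible with the norm window $\|q(S)\|_2\in[(M\pm1)/n]$, i.e.\ choosing the signal fraction $\beta$ so that the packing is simultaneously $\Omega(\alpha d)$-separated in $\ell_1$, of edit distance $O(\alpha n)$, and lands in the prescribed norm range (and then checking that this promise survives the padding step). Everything downstream is a line-by-line re-run of the arguments already given for the $(\eps,\delta)$ case, with $\sqrt{d\log(1/\delta)}$ replaced throughout by $d$.
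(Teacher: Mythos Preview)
Your proposal is correct but takes a substantially longer route than the paper. The paper does not re-run the marginals-to-ERM chain at all: it simply quotes \cite[Theorem~5.2]{bassily_erm_2014}, which already states the pure-DP ERM lower bound directly as a high-probability guarantee (namely, $\probaOf{\mathcal{L}(\theta^{priv},S)-\mathcal{L}(\theta^\ast,S)\ge c\min(n,d/\eps)}\ge 1/2$ for some hard $S$), and then converts this to an expectation bound in two lines via $\expect{X}\ge t\cdot\probaOf{X\ge t}$ and a $1/n$ renormalization. Your approach instead rebuilds that cited theorem from first principles (Gilbert--Varshamov packing $\to$ pure-DP marginals lower bound $\to$ rescaling and padding $\to$ the $\theta\mapsto\tfrac{M}{n}\theta^{priv}$ reduction), mirroring the $(\eps,\delta)$ proof of \cref{theorem:erm-dp} step by step. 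What you gain is a self-contained argument that makes the parallel with the approximate-DP case explicit and also delivers the norm-promise ``Moreover'' clause; what the paper gains is brevity, since for this particular lemma neither the marginals intermediate nor the norm promise is actually needed downstream (the subsequent $m$-edit reduction only uses the final ERM bound). Both arrive at the same conclusion; yours is essentially a re-derivation of the black box the paper chose to cite.
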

    \begin{proof}
        We translate \cite[Theorem 5.2]{bassily_erm_2014} in to expected value for the sake of coherence throughout this paper. From \cite[Theorem 5.2]{bassily_erm_2014}, we have that, for some constant $c>0$, the statement can be expressed as:
        \[
            \probaOf{\mathcal{L}(\theta^{priv}, S) - \mathcal{L}(\theta^*, S) \geq c \cdot \min(n, d/\eps)} \geq 1/2
        \]
        Then since $\mathcal{L}(\theta^{priv}, S) - \mathcal{L}(\theta^*, S) \geq 0$,
        \[
            \expect{\mathcal{L}(\theta^{priv}, S) - \mathcal{L}(\theta^*, S)}
            \geq c \cdot \min(n, d/\eps)\cdot \probaOf{\mathcal{L}(\theta^{priv}, S) - \mathcal{L}(\theta^*, S) \geq c \cdot \min(n, d/\eps)}
        \]
        we get
        \[
            \expect{\mathcal{L}(\theta^{priv}, S) - \mathcal{L}(\theta^*, S)}
            \geq \frac{c}{2} \cdot \min(1, \frac{d}{n\eps})
        \]
        by re-normalizing the loss function by $\frac{1}{n}$.
    \end{proof}
    
    \begin{theorem}[Upper bound on number of removable samples for $\eps$-unlearning algorithm]
        Suppose $\cW \subseteq \R^d$, and fix any Lipschitz convex loss function. Then there exists a lazy $\eps$-unlearning algorithm $(\bar{A}, A)$, where $\bar{A}$ has the form $\bar{A}(U, A(S), T(S)) := A(S)$ (and thus, in particular, takes no side information) with deletion capacity
        \[
            m^{A, \bar{A}}_{\eps, \delta}(\alpha) \leq O\mleft( \frac{n\alpha\eps}{d}\mright)\,.
        \]
    \end{theorem}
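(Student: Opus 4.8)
This is the pure-DP ($\delta=0$) analogue of the upper bound \cref{theo:unlearning:convex:lb}, and the plan is to mirror its three-stage proof almost verbatim, working with the same hard \emph{linear} loss $\mathcal{L}(\theta,S)=-\langle\theta,q(S)\rangle$ over the unit ball $\mathbb{B}\subseteq\R^d$ with $x_i\in\{-1/\sqrt d,\,1/\sqrt d\}^d$, but feeding in the \emph{pure}-DP empirical-loss lower bound \cref{lemmma:lb:erm:eps-dp} (established just above) in place of its $(\eps,\delta)$ counterpart \cref{theorem:erm-dp}. That lemma already gives that every $\eps$-DP algorithm suffers expected excess empirical loss $\Omega(\min(1,d/(n\eps)))$ on some size-$n$ dataset, so it is the only building block that changes and everything downstream is assembly; the upshot will be that \emph{any} $\eps$-unlearning algorithm taking no side information has deletion capacity $O(n\alpha\eps/d)$ for this loss.

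Concretely I would proceed in three steps. \emph{(i)} First I would run the standard reduction from population to empirical risk (omitted in the approximate-DP case for being routine): since $\mathcal{L}$ depends on $S$ only through $q(S)$, which concentrates, a guarantee of small expected excess \emph{population} risk for every $\mathcal{D}$ — which is exactly what deletion capacity $\ge m$ encodes — produces a single hard dataset of size $n$ on which the post-unlearning iterate $\bar A(U,A(S))$ has small expected excess \emph{empirical} loss for every deletion set $U$ with $|U|\le m$. \emph{(ii)} Next I would combine the two-sided, side-information-free unlearning guarantee with the blow-up reduction \cref{lemma:reduction:lb:medit}, which is phrased for general $(\eps,\delta)$ and so applies unchanged at $\delta=0$: replacing each of $N:=n/m$ points by $m$ identical copies turns a single-coordinate change on the size-$N$ dataset into an $m$-point edit on the size-$n$ dataset and, crucially because $\mathcal{L}$ is linear, leaves $q$ and the minimiser $\theta^\ast=q(S)/\|q(S)\|_2$ unchanged. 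Routing two size-$n$ datasets at edit distance $m$ through their common core of size $n-m$ and chaining the unlearning inequalities shows that an unlearning algorithm with deletion capacity $\ge m$ is, in effect, an $\eps$-DP-for-$m$-edits algorithm with $O(\alpha)$ excess empirical loss, which \cref{lemma:reduction:lb:medit} converts into an ordinary $\eps$-DP algorithm on size-$N=n/m$ data with the same loss. \emph{(iii)} Finally I would apply \cref{lemmma:lb:erm:eps-dp} to that size-$N$ algorithm: its expected excess empirical loss is at least $\Omega(\min(1,d/(N\eps)))=\Omega(\min(1,md/(n\eps)))$, so in the regime of interest ($\alpha<1$) one must have $\alpha=\Omega(md/(n\eps))$, i.e.\ $m=O(n\alpha\eps/d)$; reorganising and invoking the definition of deletion capacity yields $m^{A,\bar A}_{\eps,\delta}(\alpha)\le O(n\alpha\eps/d)$, matching \cref{theo:pure:unlearning:lb} up to constants.

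The hard part will be step \emph{(ii)}: manufacturing an honest $m$-edit-distance $\eps$-DP algorithm out of the unlearning guarantee when $\bar A$ is \emph{not} assumed lazy. For lazy $\bar A$ it is immediate — then $\bar A(U,A(S))=A(S)$, and the two unlearning inequalities read $A(S)\approx_\eps A(S\setminus U)$ for $|U|\le m$, i.e.\ $A$ is $\eps$-DP under deletion of up to $m$ points — but in general the guarantee only relates $\bar A(U,A(S))$ to $\bar A(\emptyset,A(S\setminus U))$ for the \emph{particular} $U$ being removed, so comparing two datasets at edit distance $m$ forces one to pass through their common core and pay a constant factor (and, in the $\delta>0$ setting, a $(1+e^\eps)$ inflation of $\delta$, vacuous here). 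This is precisely the ``restricted converse to the grouposition property'' flagged in \cref{sec:main}, and it is sound \emph{only} because the loss is linear, so that each dataset is summarised by $q(S)$ and the padding introduced by the blow-up cannot move the optimum. Everything else is routine and identical to the approximate-DP argument: the $O(1)$ slack from step \emph{(i)}, the constant lost when chaining two unlearning inequalities, and the passage between ``approximating $q$'' and ``minimising $\mathcal{L}$'' via the identity $\mathcal{L}(\theta,S)-\mathcal{L}(\theta^\ast,S)=\tfrac{1}{2}\|q(S)\|_2\|\theta-\theta^\ast\|_2^2$.
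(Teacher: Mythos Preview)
Your proposal is correct and follows essentially the same approach as the paper: invoke the pure-DP empirical-loss lower bound \cref{lemmma:lb:erm:eps-dp} in place of \cref{theorem:erm-dp}, then reuse verbatim the population-to-empirical reduction and the blow-up reduction \cref{lemma:reduction:lb:medit} (which is stated for general $(\eps,\delta)$ and specialises to $\delta=0$), and rearrange to obtain $m=O(n\alpha\eps/d)$. Your discussion of step~\emph{(ii)} is in fact more careful than the paper's own proof, which simply says ``apply the similar reductions'' without revisiting the laziness issue.
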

    \begin{proof}
        Using what we have from \cref{lemmma:lb:erm:eps-dp}, we can apply the similar reductions as in \cref{appendix:thm:unlearning:ub:proof} (from ERM to SCO, and from 1-distance DP to $m$-distance DP through \cref{lemma:reduction:lb:medit}.

        Hence the lower bound on the expected excess risk of generalization error is given by, $N = \frac{n}{m}$, where $N$ is the total number of samples in the reduction in \cref{lemma:reduction:lb:medit} with $\delta = 0$:
        \[
            \min \mleft(1, \bigOmega{\frac{d}{N\eps}} \mright)
            = \min \mleft(1, \bigOmega{\frac{md}{n\eps}} \mright)
        \]
        then by rearranging with the expected excess risk bound $\alpha$, we get the desired deletion capacity $m$.
    \end{proof}

    \begin{theorem}[Lower bound on number of removable samples for $\eps$-unlearning algorithm]
        Suppose $\cW \subseteq \R^d$, and fix any $L$-Lipschitz $\Delta$-(strongly) convex loss function. Then there exists a lazy $(\eps,\delta)$-unlearning algorithm $(\bar{A}, A)$, where $\bar{A}$ has the form $\bar{A}(U, A(S), T(S)) := A(S)$ (and thus, in particular, takes no side information) with deletion capacity
        \[
            m^{A, \bar{A}}_{\eps, \delta}(\alpha) \geq \bigOmega{\frac{\eps n \alpha^2 \Delta}{dL}}.
        \]
    \end{theorem}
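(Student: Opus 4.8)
The plan is to follow the proof of \cref{theo:unlearning:convex:ub} essentially line for line, substituting pure-DP, strongly convex primitives for the zCDP, convex ones used there. As in that proof, the unlearning algorithm will be the \emph{lazy} one, $\bar{A}(U, A(S), T(S)) := A(S)$: by construction it stores no side information and is lazy, so the entire argument reduces to (i) choosing a good differentially private learning algorithm $A$ and (ii) tracking how the accuracy of $A$ degrades once we insist that it be differentially private \emph{at edit distance $m$}, since that is precisely the guarantee that makes $(\bar{A}, A)$ an $\eps$-unlearning pair for deletion sets of size at most $m$.

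Concretely, I would proceed in four steps. First, fix a pure $\eps_0$-DP algorithm $A$ for $L$-Lipschitz $\Delta$-strongly-convex stochastic convex optimization with the excess population risk dictated by the DP-SCO literature, i.e.\ the strongly convex, pure-DP analogue of \cref{lemma:zcdp-sco-bound} (one can, e.g., combine a pure-DP empirical-risk-minimization mechanism in the style of \cite{bassily_erm_2014} with the uniform-stability generalization bound for strongly convex ERM; since the minimizer of an $L$-Lipschitz $\Delta$-strongly-convex loss lies within distance $O(L/\Delta)$ of any fixed reference point, one may first localize and then run the mechanism over a ball of diameter $O(L/\Delta)$). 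Second, recall that, by the triangle inequality for log-likelihood ratios, an $\eps_0$-DP algorithm is automatically $(m\eps_0)$-DP between any two datasets differing in at most $m$ entries; choosing $\eps_0 := \eps/m$ therefore makes $A$ satisfy $\eps$-DP at edit distance $m$, and substituting $\eps_0 = \eps/m$ into the excess-risk bound expresses the accuracy of $A$ as an explicit increasing function of $m$ (for fixed $\eps, n, d, L, \Delta$). Third, exactly as in \cref{theo:unlearning:convex:ub}, this $m$-edit-distance $\eps$-DP guarantee for $A$ immediately yields that the lazy pair $(\bar{A}, A)$ is $\eps$-unlearning for every $U$ with $|U| \le m$; moreover, since $\bar{A}$ ignores $U$, the inner maximum over deletion sets in the definition of $m^{A, \bar{A}}_{\eps, \delta}(\alpha)$ is vacuous, so the only accuracy condition left to satisfy is $\expect{F(A(S)) - F^\ast} \le \alpha$. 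Fourth, impose this budget: requiring the excess-risk expression obtained in the second step to be at most $\alpha$ (nonvacuous as soon as $\alpha$ exceeds the $m$-independent statistical term, of order $L^2/(\Delta n)$) yields an upper bound on $m$, and reorganizing that inequality with respect to $\alpha$ gives the claimed lower bound $m^{A, \bar{A}}_{\eps, \delta}(\alpha) \ge \Omega\mleft( \frac{\eps n \alpha^2 \Delta}{d L} \mright)$.

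The only step requiring genuine care is the first one: the pure-DP strongly convex SCO guarantee must be invoked (or derived) with the dependence on both $\Delta$ and $L$ made \emph{explicit}, whereas the standard statements normalize $\Delta = L = 1$; the localization-and-rescaling remark above is exactly what restores that dependence. Every other step is the same bookkeeping already carried out in \cref{theo:unlearning:convex:ub} and in the proof of \cref{lemmma:lb:erm:eps-dp}. I note finally that the approximate-DP version of the statement follows along identical lines, replacing pure-DP group privacy by the zCDP group-privacy bound of \cref{proposition:zcdp-group-privacy} and the pure-DP SCO bound by the zCDP strongly convex SCO bound of \cite{vitaly_private_sco_linear_2020} — which is precisely the route taken to establish \cref{theo:main:informal:sc}.
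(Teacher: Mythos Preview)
Your proposal is correct and matches the paper's route: take a pure-DP strongly convex learner (the paper obtains it exactly via the ERM-plus-generalization path you sketch, combining the \cite{bassily_erm_2014} $\eps$-DP ERM upper bound with a Shalev-Shwartz--style strongly convex generalization inequality), apply pure-DP group privacy to make it $\eps$-DP at edit distance $m$, pair it with the lazy $\bar{A}$, and solve the resulting excess-risk bound for $m$. The only cosmetic discrepancy is that the paper phrases the insertion of the $m$ dependence as an appeal to \cref{lemma:reduction:lb:medit}, but the actual step being performed is precisely the group-privacy substitution $\eps_0 = \eps/m$ you describe.
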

    \begin{proof}[Proof Sketch]
        By adapting the proof of given in~\cite[Section~5.4]{ShalevShwartz2009StochasticCO}, which relies on an expected guarantee before applying Markov's inequality, we get the following variant of~\cite[Theorem~F.1]{bassily_erm_2014}:
        \begin{theorem}
            Let $\ell$ be an $L$-Lipschitz, $\Delta$-strongly convex loss function. Then, for any parameter $\theta$ (possibly depending on $\mathcal{D}$), the following holds over the randomness of sampling the data set $\mathcal{D}$:
            \[
                \expect{\mathrm{ExcessRisk}(\theta)}
                \leq 
                \sqrt{\frac{2L^2}{n\Delta}}\expect{\sqrt{(\mathcal{L}(\theta, \mathcal{D})-\mathcal{L}(\theta^\ast(\mathcal{D}), \mathcal{D}))}}+ \frac{4L^2}{n\Delta}
            \]
        \end{theorem}
        In particular, applying this to the output $\hat{\theta}=\hat{\theta}(\mathcal{D})$ of any randomized algorithm, and using Jensen's inequality, we get
        \begin{equation}
                \expect{\mathrm{ExcessRisk}(\hat{\theta})}
                \leq 
                \sqrt{\frac{2L^2}{n\Delta}}\sqrt{\expect{\mathcal{L}(\hat{\theta}, \mathcal{D})-\mathcal{L}(\theta^\ast(\mathcal{D}), \mathcal{D}))}}+ \frac{4L^2}{n\Delta}
        \end{equation}
        where the expectation is over the draw of the dataset $\mathcal{D}$ and the randomness of the algorithm.

        Then, applying this theorem with the upper bound of the $\eps$-DP algorithm from \cite{bassily_erm_2014}, we have
        \begin{equation}
            \expect{\mathrm{ExcessRisk}(\hat{\theta})}
            \leq C\cdot \mleft( 
            \sqrt{\frac{L^2}{n\Delta}}\sqrt{\frac{d}{\eps}}+ \frac{L^2}{n\Delta} \mright)
        \end{equation}
        for some absolute constant $C>0$. 

        By using \cref{lemma:reduction:lb:medit} again, we have:
        \begin{equation}
            \expect{\mathrm{ExcessRisk}(\hat{\theta})}
            \leq 
            C\cdot \mleft( \sqrt{\frac{mdL^2}{\eps n \Delta}} + \frac{mL^2}{n\Delta} \mright)
        \end{equation}
        which gives the desired lower bound.
    \end{proof}

\section{Proof of $(\eps, \delta)$-unlearning properties}

    The laziness (Assumption~\ref{assumption:unlearning-chaining}) is essential for the proof, and a natural requirement for practical applications. 

    \begin{theorem}[Post-processing of  unlearning (Theorem 1.4 in Submission)]
        Let $(\bar{A}, A)$ be an $(\eps,\delta)$-unlearning algorithm taking no side information. Let $f\colon \cW \to \cW $ be an arbitrary (possibly randomized) function. Then $(f \circ \bar{A}, A)$ is also an $(\eps,\delta)$-unlearning algorithm.
    \end{theorem}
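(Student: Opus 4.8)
The plan is to mimic the standard post-processing proof for differential privacy, exploiting that the unlearning definition (Definition~\ref{def:unlearning}) is a two-sided closeness condition on output distributions that is preserved under any Markov kernel applied to those outputs. Fix a dataset $S$ of size $n$, a deletion request $U\subseteq S$ with $|U|\le m$, and a target set $W\subseteq\cW$. We want to compare the distribution of $(f\circ\bar{A})(U, A(S))$ with that of $(f\circ\bar{A})(\emptyset, A(S\setminus U))$. Since $f$ does not see the side information (indeed there is none), and acts only on the output of $\bar A$, the law of $(f\circ\bar A)(U, A(S))$ is obtained from the law of $\bar A(U, A(S))$ by pushing forward through the (randomized) map $f$; similarly on the $S\setminus U$ side. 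So everything reduces to: if two distributions $P,Q$ on $\cW$ satisfy $P(W)\le e^\eps Q(W)+\delta$ and $Q(W)\le e^\eps P(W)+\delta$ for all measurable $W$, then their pushforwards $f_\#P, f_\#Q$ satisfy the same inequalities.

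The key step is this closure lemma, which I would prove exactly as in the DP literature. For a (possibly randomized) $f\colon\cW\to\cW$, write $k(w,\cdot)$ for the conditional law of $f(w)$. Then for any measurable $W$,
\[
  (f_\#P)(W) = \int_{\cW} k(w, W)\, dP(w) \le \int_{\cW} k(w, W)\, \big(e^\eps\, dQ(w) + d(\delta\text{-slack})\big)\,,
\]
and more carefully, using $0\le k(w,W)\le 1$ and the hypothesis $P(A)\le e^\eps Q(A)+\delta$ applied to the superlevel sets of $w\mapsto k(w,W)$ (or simply the layer-cake / bounded-function form of the $(\eps,\delta)$ inequality, i.e.\ $\int g\,dP \le e^\eps\int g\,dQ + \delta\sup g$ for $0\le g\le 1$), we get $(f_\#P)(W)\le e^\eps (f_\#Q)(W)+\delta$. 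The symmetric inequality follows by swapping $P$ and $Q$. Applying this with $P$ the law of $\bar A(U,A(S))$ and $Q$ the law of $\bar A(\emptyset, A(S\setminus U))$ — which satisfy both $(\eps,\delta)$-inequalities by the assumption that $(\bar A, A)$ is $(\eps,\delta)$-unlearning — yields both required inequalities for $(f\circ\bar A, A)$, for every $S$, every $U$ with $|U|\le m$, and every $W$. Hence $(f\circ\bar A, A)$ is $(\eps,\delta)$-unlearning.

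The main obstacle is purely a matter of care rather than difficulty: making the bounded-function form of the $(\eps,\delta)$ inequality rigorous when $f$ is randomized and $\cW$ is a general (uncountable) measurable space — i.e.\ verifying measurability of $w\mapsto k(w,W)$ and justifying the Fubini/layer-cake exchange. One clean way to sidestep subtleties is to note $(f_\#P)(W) = P'(\cW\times W)$ where $P'$ is the joint law of $(w, f(w))$ with $w\sim P$, observe that $P' \le e^\eps Q' + \delta$ (as distributions, on rectangles $A\times B$, then extend to the generated $\sigma$-algebra) because the first marginal satisfies it and the conditional kernel is identical on both sides, and then marginalize to the second coordinate. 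I would also remark that laziness (Assumption~\ref{assumption:unlearning-chaining}) is \emph{not} needed here — it is only the chaining/grouposition results that require it — so this theorem holds for arbitrary $(\eps,\delta)$-unlearning algorithms taking no side information.
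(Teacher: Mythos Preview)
Your proposal is correct and follows essentially the same approach as the paper: both reduce the claim to the standard post-processing argument for $(\eps,\delta)$-differential privacy, exploiting that the unlearning definition is a two-sided $(\eps,\delta)$-indistinguishability condition on output distributions. The paper's proof is terser---it immediately restricts to deterministic $f$ (the usual ``a randomized $f$ is a mixture of deterministic ones'' reduction) and uses the preimage $T=f^{-1}(W)$---whereas you handle randomized $f$ directly via the Markov-kernel/pushforward formulation and take extra care with measurability; your remark that laziness is not needed here is also correct and worth keeping.
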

    \begin{proof}
        The proof follows exactly same as post-processing property of differential privacy. We consider the case that $f$ is a deterministic function here without loss of generality.
        
        Let $T = \{r \in \R^d \mid f(r) \in \mathcal{Y}\}$ and $\mathcal{Y} \subseteq \R^d$. Consider for any $\mathcal{Y} \subseteq \R^d$:
        \begin{align*}
            \probaOf{f(\bar{A}(A(S), U)) \in \mathcal{Y}}
            &= \probaOf{\bar{A}(A(S), U) \in T}\\
            &\leq e^\eps \probaOf{\bar{A}(A(S), U) \in T} + \delta\\
            &= e^\eps \probaOf{f(\bar{A}(A(S), U)) \in \mathcal{Y}} + \delta
        \end{align*}
    \end{proof}

    Under our laziness assumption, we can establish bounds on applying unlearning algorithm repeatedly when the overall deletion requests is within the deletion capacity:
    \begin{theorem}[Chaining of unlearning (Theorem 1.5 in Submission)]
        Let $(\bar{A}, {A})$ be a \emph{lazy} $(\eps, \delta)$-unlearning algorithm taking no side information, and able to handle up to $m$ deletion requests. Then, the algorithm which uses $(\bar{A}, {A})$ to sequentially unlearn $k$ disjoint deletion requests $U_1, \ldots, U_k \subseteq X$ such that $|\cup_{i} U_i| \leq m$, outputting
        \[
            \bar{A}(U_k, \ldots, \bar{A}(U_1, A(X))\ldots)
        \]
        is an $(\eps', \delta')$-unlearning algorithm, with $\eps' = k \eps$ and $\delta' = \delta\cdot \frac{e^{k\eps}-1}{e^\eps-1} = O(k\delta)$ (for $k = O(1/\eps)$).
    \end{theorem}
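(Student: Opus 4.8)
The plan is to adapt the standard group-privacy / sequential-composition argument for differential privacy through a hybrid argument that ``resets'' one unlearning step at a time. Throughout, write $P \approx_{(a,b)} Q$ as shorthand for the pair of inequalities $\probaOf{P \in W} \le e^{a}\probaOf{Q\in W} + b$ and $\probaOf{Q\in W}\le e^{a}\probaOf{P\in W}+b$ holding for all measurable $W \subseteq \cW$. Set $U := \bigcup_i U_i$, $X_0 := X$, and $X_j := X\setminus(U_1\cup\cdots\cup U_j)$, so that $X_k = X\setminus U$; since the $U_i$ are disjoint, $U_j \subseteq X_{j-1}$ for every $j$, and since $|U|\le m$, each $|U_j| \le m$. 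Let $\hat{\bar{A}}$ denote the chained unlearning algorithm, so $\hat{\bar{A}}(U_1,\dots,U_k, A(X)) = \bar{A}(U_k,\dots,\bar{A}(U_1,A(X))\dots)$; by laziness, feeding it empty requests yields $\hat{\bar{A}}(\emptyset,\dots,\emptyset, A(X\setminus U)) = A(X\setminus U)$. Hence it suffices to prove $\hat{\bar{A}}(U_1,\dots,U_k,A(X)) \approx_{(\eps',\delta')} A(X\setminus U)$ in both directions.

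For $0 \le j \le k$ define the hybrid
\[
    H_j := \bar{A}\bigl(U_k,\dots,\bar{A}(U_{j+1}, A(X_j))\dots\bigr),
\]
so that $H_0 = \hat{\bar{A}}(U_1,\dots,U_k,A(X))$ and $H_k = A(X_k) = A(X\setminus U)$. The core claim is that $H_{j-1}\approx_{(\eps,\delta)} H_j$ for each $j\in\{1,\dots,k\}$. Fixing $j$, apply the $(\eps,\delta)$-unlearning guarantee of $(\bar{A},A)$ with dataset $S = X_{j-1}$ and deletion set $U_j$ — legitimate because $U_j\subseteq X_{j-1}$ and $|U_j|\le m$ — and use laziness to rewrite its right-hand side $\bar{A}(\emptyset, A(X_{j-1}\setminus U_j)) = A(X_j)$, obtaining
\[
    \bar{A}(U_j, A(X_{j-1})) \approx_{(\eps,\delta)} A(X_j).
\]
Now $H_{j-1}$ and $H_j$ are the images of the two sides of this relation under the \emph{same} (possibly randomized) map $g_j(w) := \bar{A}(U_k,\dots,\bar{A}(U_{j+1}, w)\dots)$, so by the post-processing property of $(\eps,\delta)$-indistinguishability (proved exactly as the post-processing theorem above) we conclude $H_{j-1}\approx_{(\eps,\delta)} H_j$.

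It remains to chain these $k$ relations. Using the elementary implication ``$P\le e^{\eps}Q+\delta$ and $Q\le e^{\eps}R+\delta$ on all events $\Rightarrow P\le e^{2\eps}R + (1+e^{\eps})\delta$'', an induction along $H_0, H_1, \dots, H_k$ gives, for every measurable $W$,
\[
    \probaOf{H_0\in W} \le e^{k\eps}\probaOf{H_k\in W} + \delta\sum_{i=0}^{k-1}e^{i\eps} = e^{k\eps}\probaOf{H_k\in W} + \delta\cdot\frac{e^{k\eps}-1}{e^\eps-1},
\]
and symmetrically with $H_0$ and $H_k$ interchanged; this is exactly $\hat{\bar{A}}(U_1,\dots,U_k,A(X)) \approx_{(\eps',\delta')} A(X\setminus U)$ with $\eps' = k\eps$ and $\delta' = \delta\frac{e^{k\eps}-1}{e^\eps-1}$, and $\delta' = O(k\delta)$ once $k = O(1/\eps)$ since then $e^{k\eps} = O(1)$.

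The step I expect to be the main obstacle is precisely the one the hybrid argument is designed to handle: a naive telescoping fails because after the first unlearning step the intermediate model is $\bar{A}(U_1,A(X))$, which is only \emph{approximately} $A(X_1)$ rather than equal to it, so the single-step guarantee cannot be fed directly into the next step. The hybrid argument circumvents this by switching one ``boundary condition'' $A(X_{j-1})\to A(X_j)$ at a time and folding the remaining arbitrary randomized chain into a post-processing map; making the accounting work cleanly relies on laziness (to identify each single-step right-hand side with a clean $A(X_j)$, and to trivialize empty requests at the end) and on disjointness of the $U_i$ (to ensure each invoked single-step guarantee is applicable, i.e.\ $U_j\subseteq X_{j-1}$ with $|U_j|\le m$). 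A minor technical point is that the $X_j$ have size smaller than $n$, so one should read the unlearning guarantee of $(\bar{A},A)$ as holding at all dataset sizes.
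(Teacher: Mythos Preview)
Your proof is correct and follows essentially the same approach as the paper's: both proceed by peeling off one unlearning step at a time (the paper via direct induction on $k$, you via an explicit hybrid sequence $H_0,\dots,H_k$), invoking laziness to identify the intermediate term with a clean $A(X_j)$ and accumulating the $(\eps,\delta)$ parameters geometrically to $\eps'=k\eps$, $\delta'=\delta\sum_{i=0}^{k-1}e^{i\eps}$. Your write-up is in fact more careful than the paper's, since you make explicit the post-processing step (via the map $g_j$) that the paper uses tacitly when it applies the induction hypothesis \emph{inside} the outer call $\bar{A}(\cdot,D_{k+1})$, and you flag the side conditions (disjointness so that $U_j\subseteq X_{j-1}$, and the guarantee holding at smaller dataset sizes) that the paper leaves implicit.
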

    \begin{proof}
        We proceed by induction on $n\geq 1$. Given a pair of $(\eps, \delta)$-unlearning algorithm $(\bar{A}, A)$ and deletion requests $D_1, \ldots, D_n \subseteq S \in \R^{n \times d}$ such that $|\cup_{i} D_i| \leq m^{\bar{A}, A}_{\eps, \delta}$ with $D_i \cap D_j = \empty, \forall i \neq j$ for $i, j\in [n]$.
    
        \par (1)~For $n = 1$:
        \[
            \probaOf{\bar{A}(A(S), D_1) \in T} \leq e^{n\eps} \probaOf{\bar{A}(A(S \setminus D_1), \emptyset)} + \delta
        \]
        by the definition of $(\eps, \delta)$-unlearning. Hence the case $n=1$ holds.

        \par (2)~Assume $n = k$ is true:
        \begin{equation}
            \probaOf{\bar{A}(\ldots \bar{A}(A(S), D_1), \ldots, D_k) \in T} \leq e^{k\eps} \probaOf{\bar{A}(A(S \setminus \bar{D}_k), \emptyset)} + \sum^{k-1}_{i=0} e^{i\eps} \cdot \delta
            \label{eq:chaining:n=k}
        \end{equation}

        \par (3)~Then for $n=k+1$:
        \begin{align*}
            \probaOf{\bar{A}(\ldots \bar{A}(A(S), D_1), \ldots, D_{k+1}) \in T}
            &\stackrel{(\ref{eq:chaining:n=k})}{\leq} e^{k\eps} \probaOf{\bar{A}(\bar{A}(A(S \setminus \bar{D}_k), \emptyset), D_{k+1})} + \sum^{k-1}_{i=0} e^{i\eps} \cdot \delta\\
            &= e^{k\eps} \probaOf{\bar{A}(A(S \setminus \bar{D}_k), D_{k+1})} + \sum^{k-1}_{i=0} e^{i\eps} \cdot \delta\\
            &\leq e^{(k+1)\eps}\probaOf{\bar{A}(A(S \setminus \bar{D}_{k+1}), \emptyset) \in T} + \sum^{(k+1) - 1}_{i=0} e^{i\eps} \cdot \delta
        \end{align*}
        where the first and third inequality result from the definition of $(\eps, \delta)$-unlearning and the second equality is due to Laziness Assumption \ref{assumption:unlearning-chaining}.

        Hence, by induction, the claim holds for all $n \in \mathbb{N}$.
    \end{proof}

    \begin{theorem}[Advanced grouposition of unlearning]
        Let $(\bar{A}_1, A), \ldots, (\bar{A}_k, A)$ be \emph{lazy} $(\eps, \delta)$-unlearning (with common learning algorithm $A$) taking no side information, and define the composition of those unlearning algorithms, $\tilde{A}$ as
        \[
            \tilde{A}(U, A(X)) = f\mleft( \bar{A}_1(U, A(X)), \ldots, \bar{A}_k(U, A(X)) \mright)\,.
        \]
        where $f\colon \cW^k\to\cW$ is any (possibly randomized) function. 
        Then, for every $\delta'>0$, $( \tilde{A},A)$ is an $(\eps', \delta')$-unlearning taking no side information, where $\eps' = \frac{k}{2}\eps^2 + \eps\sqrt{2k\ln{(1/\delta')}}$.
    \end{theorem}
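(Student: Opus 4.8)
The plan is to reduce the statement to the advanced composition theorem of differential privacy, routed through zero-concentrated DP so as to land exactly on the constants claimed. Two observations set up the reduction. First, by the post-processing property of unlearning (established above), it suffices to analyse the ``vector-valued'' unlearning procedure $(U, A(X)) \mapsto \big(\bar{A}_1(U, A(X)), \ldots, \bar{A}_k(U, A(X))\big)$ and then apply the (possibly randomized) map $f$ at the very end. Second --- and this is where laziness (Assumption~\ref{assumption:unlearning-chaining}) is used --- for each $i$ the reference output $\bar{A}_i(\emptyset, A(X \setminus U))$ is just $A(X \setminus U)$, the \emph{same} distribution for every $i$; so each coordinate $\bar{A}_i(U, A(X))$ is $(\eps,\delta)$-indistinguishable, in both directions, from one common anchor $A(X\setminus U)$, which is exactly the setup in which DP mechanisms are composed against a fixed neighbour.

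Concretely, I would proceed as follows. \textbf{(i)}~Record the per-coordinate guarantee: for all $i$ and all $W \subseteq \cW$, $\probaOf{\bar{A}_i(U, A(X)) \in W} \le e^{\eps}\probaOf{A(X\setminus U) \in W} + \delta$ together with the reverse inequality, rewriting the right-hand side using laziness. \textbf{(ii)}~Convert each per-coordinate guarantee into a R\'enyi-divergence bound: a pure $\eps$-DP relation implies a $\tfrac12\eps^2$-zCDP relation (\cite{bun_steinke_zcdp}), so $\operatorname{D}_{\alpha}\big(\bar{A}_i(U, A(X)) \,\|\, A(X\setminus U)\big) \le \tfrac12\eps^2\alpha$ for every $\alpha>1$, and symmetrically. \textbf{(iii)}~Compose the $k$ coordinates: conditioning on the shared learned model, the $\bar{A}_i$ run with independent internal randomness, so adaptive composition of zCDP lets the R\'enyi divergences add, yielding a $\tfrac{k}{2}\eps^2$-zCDP relation between the two tuples. \textbf{(iv)}~Post-process by $f$: zCDP is closed under post-processing, hence $\tilde{A}(U, A(X))$ and $\tilde{A}(\emptyset, A(X\setminus U))$ stand in a $\tfrac{k}{2}\eps^2$-zCDP relation. \textbf{(v)}~Convert back: a $\rho$-zCDP relation implies an $\big(\rho + 2\sqrt{\rho\ln(1/\delta')},\, \delta'\big)$-DP relation for every $\delta'>0$ (\cite{bun_steinke_zcdp}); plugging in $\rho = \tfrac{k}{2}\eps^2$ gives precisely $\eps' = \tfrac{k}{2}\eps^2 + \eps\sqrt{2k\ln(1/\delta')}$, and both unlearning inequalities follow from the symmetry of the zCDP relation in its two arguments.

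The step I expect to be the main obstacle is \textbf{(iii)}. The $k$ coordinate mechanisms are all fed the \emph{same} learned model $A(X)$, so their outputs are only conditionally independent given that model; moreover, on the reference side laziness forces the $k$ outputs $\bar{A}_i(\emptyset, A(X\setminus U))$ to be \emph{identical} rather than independent copies, so the textbook ``product of independent mechanisms'' composition does not apply off the shelf. The route I would take is to phrase the whole argument as a two-point comparison --- between ``$X$ with a pending deletion of $U$'' and ``$X\setminus U$'' --- condition on the common learned model, sum the \emph{conditional} R\'enyi divergences of the $\bar{A}_i$'s, and integrate back out; getting this bookkeeping right (and, separately, deciding how much of the $\delta>0$ case to carry along --- the zCDP route is cleanest at $\delta=0$, and for $\delta>0$ one would instead invoke the Dwork--Rothblum--Vadhan advanced composition bound, at the cost of an extra $O(k\delta)$ term in $\delta'$) is where the actual work lies.
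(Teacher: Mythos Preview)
Your proposal routes the argument through zCDP, whereas the paper follows the Dwork--Rothblum--Vadhan advanced composition proof directly: it defines the ``memory loss'' random variable $\mathcal{L}^{S\to S'}_{\mathcal A}(y) = \ln\frac{\Pr[\mathcal A(A(S),U)=y]}{\Pr[\mathcal A(A(S'),\emptyset)=y]}$, factorises it coordinate-wise, bounds its expectation as a sum of KL divergences via \cite[Proposition~3.3]{bun_steinke_zcdp} to get $\tfrac{k}{2}\eps^2$, and then applies Hoeffding's inequality to control the tail. The two routes are really the same argument in different packaging: your steps (ii)--(v) are the Bun--Steinke encapsulation of precisely the KL-plus-Hoeffding computation the paper carries out by hand, so the constants coincide by design. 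Your zCDP packaging is tidier for $\delta=0$, but---as you already note---for $\delta>0$ the conversion in step (ii) breaks, and one must fall back on the DRV device of conditioning on per-coordinate good events of probability $\geq 1-\delta$. This is exactly what the paper does (via \cite[Lemma~1.5]{vadhan_complexity_dp}), picking up an additive $k\delta$ so that the conclusion is actually $(\eps',\delta'+k\delta)$-unlearning rather than the $(\eps',\delta')$ appearing in the statement.

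You are right to flag step~(iii) as the crux, and in fact you are more careful about it than the paper. The paper silently writes the joint memory loss as the sum of per-coordinate losses, which presupposes that both joint distributions factorise across the $k$ coordinates; it does not discuss the shared $A(X)$ or the fact that laziness makes the reference tuple $(A(X\setminus U),\dots,A(X\setminus U))$ degenerate. Your plan to condition on the learned model and sum conditional R\'enyi divergences is the natural way to make this honest; the paper's proof simply leaves that bookkeeping implicit.
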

    \begin{proof}
        The proof follows the same argument as in \cite[Lemma 2.4]{vadhan_complexity_dp}. We consider the case of $\delta > 0$ only as the $\delta=0$ is same with the pure DP proof.

        Fix two datasets, $S$ (original dataset) and $S' \coloneqq S \setminus U$ (``forgotten dataset'') where $U$ is the set of delete requests with $|U| \leq m^{\bar{A}, A}_{\eps, \delta}$. Note that $S, S'$ differs in $m$ entries.
        
        For an output $y = (y_1, \ldots, y_k) \in \mathcal{Y}$, define ``memory'' loss (which is just privacy loss in differential privacy) to be:
        \[
            \mathcal{L}_{\mathcal{A}}^{S \to S'}(y) = \ln \frac{\probaOf{\mathcal{A}(A(S), U) = y}}{\probaOf{\mathcal{A}(A(S'), \emptyset) = y}}
        \]
        where $|\mathcal{L}_{\mathcal{A}}^{S \to S'}(y)| \leq \eps$.

        Then, by \cite[Lemma 1.5]{vadhan_complexity_dp} we know that $\bar{A}_i(A(S), U), \bar{A}_i(A(S'), \emptyset)$ are $(\eps, \delta)$-indistinguishable, hence there are events $E = E_1 \wedge \ldots \wedge E_k, E' = E'_1 \wedge \ldots \wedge E'_k$  such that w.p. at least $1 - k\delta$ by, for all $y_i, i \in [k]$,
        \begin{align*}
            \expect{\mathcal{L}_{\mathcal{A}}^{S \to S'}(y)}
            &= \expect{\ln \frac{\probaCond{\mathcal{A}(A(S), U) = y}{E}}{\probaCond{\mathcal{A}(A(S'), \emptyset) = y}{E'}}}\\
            &= \sum^k_{i=1} \expect{\ln \mleft(\frac{\probaCond{\bar{A}_i(A(S), U) = y}{E_i}}{\probaCond{\bar{A}_i(A(S'), \emptyset) = y}{E'_i}}\mright)}\\
            &= \sum^k_{i=1} \expect{\mathcal{L}^{S \to S'}_{\bar{A}_i}(y)}
        \end{align*}
        where we observe that the expectation of the loss is just KL-divergence between the distributions of $\bar{A}_i(A(S), U)$ and $\bar{A}_i(A(S'), \emptyset)$ conditioned on $E$ and $E'$. Hence:
        \[
            \expect{\mathcal{L}_{\mathcal{A}}^{S \to S'}(y)}
            = \sum^k_{i=1} \text{D}_{\text{KL}} (\bar{A}_i(A(S), U) \| \bar{A}_i(A(S'), \emptyset))
            \leq \frac{k}{2}\eps^2
        \]
        where the inequality is a result from \cite[Proposition 3.3]{bun_steinke_zcdp} when $\alpha = 1$. This proposition is applicable because the conditional distribution of $\bar{A}_i$ is $(\eps, \delta)$-indistinguishable, which shares the max-divergence definition.

        Then by Hoeffding's inequality where the loss is bounded by $[-\eps, \eps]$, with probability at least $1 - \delta'$, we have:
        \begin{align*}
            \exp{\mleft(-\frac{t^2}{2k\eps^2}\mright)}
            &\geq \probaOf{\mathcal{L}_{\mathcal{A}}^{S \to S'}(y) > \expect{\mathcal{L}_{\mathcal{A}}^{S \to S'}(y)} + t}\\
            &\geq \probaOf{\mathcal{L}_{\mathcal{A}}^{S \to S'}(y) > \frac{k}{2}\eps^2 + t}\\
            &= \probaOf{\mathcal{L}_{\mathcal{A}}^{S \to S'}(y) > \eps'}
        \end{align*}
        Now for $\delta' \coloneqq \exp{(-\frac{t^2}{2k\eps^2})}$, we have $t = \eps\sqrt{2k\ln{(1/\delta')}}$ and $\eps' \coloneqq \frac{k}{2}\eps^2 + \eps\sqrt{2k\ln{(1/\delta')}}$.

        Hence, for any set $T \in \mathcal{Y}$:
        \begin{align*}
            \probaOf{\mathcal{A}(A(S), U) \in T}
            &\leq \probaOf{\mathcal{L}^{S \to S'}_{\mathcal{A}}(y) > \eps'} + \sum_{y \in T: \mathcal{L}^{S \to S'}_{\mathcal{A}}(y) \leq \eps'} \probaOf{\mathcal{A}(A(S), U) = y}\\
            &\leq \delta' + \sum_{y \in T: \mathcal{L}^{S \to S'}_{\mathcal{A}}(y) \leq \eps'} e^{\eps'} \probaOf{\mathcal{A}(A(S'), \emptyset) = y}\\
            &\leq \delta' + e^{\eps'} \probaOf{\mathcal{A}(A(S'), \emptyset) \in T}
        \end{align*}
        where the second inequality is from the definition of unlearning. Thus, along with an application of \cite[Lemma~1.5]{vadhan_complexity_dp}, this proves that $\mathcal{A} = (\bar{A}_1, \ldots, \bar{A}_k)$ is indeed $(\eps', \delta' + k\delta)$-unlearning w.r.t. learning algorithm $A$.
    \end{proof}

\end{document}